\def\eqref#1{equation~\ref{#1}}
\def\1{\bm{1}}
\DeclareMathAlphabet{\mathsfit}{\encodingdefault}{\sfdefault}{m}{sl}
\SetMathAlphabet{\mathsfit}{bold}{\encodingdefault}{\sfdefault}{bx}{n}
\definecolor{oursrow}{HTML}{EBDBBC}
\definecolor{grayrow}{HTML}{BFBFBA}
\newcommand{\ie}{\emph{i.e.,}\xspace}
\newcommand{\eg}{\emph{e.g.,}\xspace}
\newtheorem{assumption}{Assumption}
\newtheorem{theorem}{Theorem}
\newtheorem{lemma}{Lemma}
\newtheorem{proposition}{Proposition}
\theoremstyle{definition}
\newcommand{\ours}{\textsc{MMR1}\xspace}
\title{MMR1: Enhancing Multimodal Reasoning with Variance-Aware Sampling and Open Resources}
\author{Sicong Leng$^{1,2,*}$ \quad Jing Wang$^{1,*}$ \quad Jiaxi Li$^{3,*}$ \quad Hao Zhang$^{2,*}$ \quad Zhiqiang Hu$^{2}$\\[2pt]
\textbf{Boqiang Zhang}$^{2}$ \quad \textbf{Yuming Jiang}$^{2}$ \quad \textbf{Hang Zhang}$^{2}$ \quad \textbf{Xin Li}$^{2}$ \quad \textbf{Lidong Bing}$^{2}$\\[2pt]
\textbf{Deli Zhao}$^{2}$ \quad \textbf{Wei Lu}$^{1}$ \quad \textbf{Yu Rong}$^{2}$ \quad \textbf{Aixin Sun}$^{1,\dagger}$ \quad \textbf{Shijian Lu}$^{1,\dagger}$ \\[5pt]
$^{1}$Nanyang Technological University \\
$^{2}$DAMO Academy, Alibaba Group \\
$^{3}$Singapore University of Technology and Design \\[5pt]
$^{*}$Equal Contributions \quad $^{\dagger}$Correspondence
}
\begin{document}

\maketitle

\begin{abstract}
Large multimodal reasoning models have achieved rapid progress, but their advancement is constrained by two major limitations: the absence of open, large-scale, high-quality long chain-of-thought (CoT) data, and the instability of reinforcement learning (RL) algorithms in post-training. 
Group Relative Policy Optimization (GRPO), the standard framework for RL fine-tuning, is prone to \textit{gradient vanishing} when reward variance is low, which weakens optimization signals and impairs convergence.
This work makes three contributions: (1) We propose Variance-Aware Sampling (VAS), a data selection strategy guided by Variance Promotion Score (VPS) that combines outcome variance and trajectory diversity to promote reward variance and stabilize policy optimization. (2) We release large-scale, carefully curated resources containing $\sim$1.6M long CoT cold-start data and $\sim$15k RL QA pairs, designed to ensure quality, difficulty, and diversity, along with a fully reproducible end-to-end training codebase. (3) We open-source a family of multimodal reasoning models in multiple scales, establishing standardized baselines for the community.
Experiments across mathematical reasoning benchmarks demonstrate the effectiveness of both the curated data and the proposed VAS. Comprehensive ablation studies and analyses provide further insight into the contributions of each component. In addition, we theoretically establish that reward variance lower-bounds the expected policy gradient magnitude, with VAS serving as a practical mechanism to realize this guarantee.
Our code, data, and checkpoints are available at \url{https://github.com/LengSicong/MMR1}.
\end{abstract}

\section{Introduction}\label{sec:intro}
Recent advances in large language and multimodal reasoning models have markedly improved performance on complex tasks such as mathematics, science, and open-domain problem solving. 
Reinforcement learning (RL) plays a central role in these developments by optimizing models with process- or outcome-based rewards~\citep{lightman2024lets,wang2024math,Li2025FromS1}. 
Group Relative Policy Optimization (GRPO; \citet{shao2024deepseekmath}) has emerged as a widely adopted RL framework due to its efficiency and scalability, and has been successfully applied to both language models~\citep{deepseekai2025deepseekr1} and multimodal models~\citep{meng2025mmeureka,wang2025vlrethinker,tan2025reasonrft,MMR1-Math2025}. 
However, GRPO is inherently susceptible to \textit{gradient vanishing}: when sampled rewards have low variance, relative advantages collapse toward zero, weakening optimization signals and destabilizing training~\citep{razin2024vanishing,razin2025what}. 
This issue persists across both unimodal and multimodal contexts, posing a fundamental challenge to effective RL optimization.

In parallel, the progress of multimodal reasoning research is influenced by the limited availability of open, large-scale, high-quality long chain-of-thought (CoT) data. 
Compared with text-only reasoning, where multiple datasets are publicly accessible~\citep{guha2025openthoughts,muennighoff2025s1}, multimodal training often relies on more restricted resources, which may constrain reproducibility and further development.
Recent studies have made progress by exploring heuristic data curation~\citep{meng2025mmeureka,huang2025visionr1,chen2025geopqa}, reward design modifications~\citep{tan2025reasonrft,shen2025vlmr1}, and training adjustments~\citep{deng2025openvlthinker,zhang2025r1vl}.
While these approaches improve downstream performance, challenges related to stable GRPO optimization and the broader availability of curated multimodal reasoning data remain underexplored.

\begin{figure}[t]
    \centering
    \includegraphics[width=0.9\linewidth]{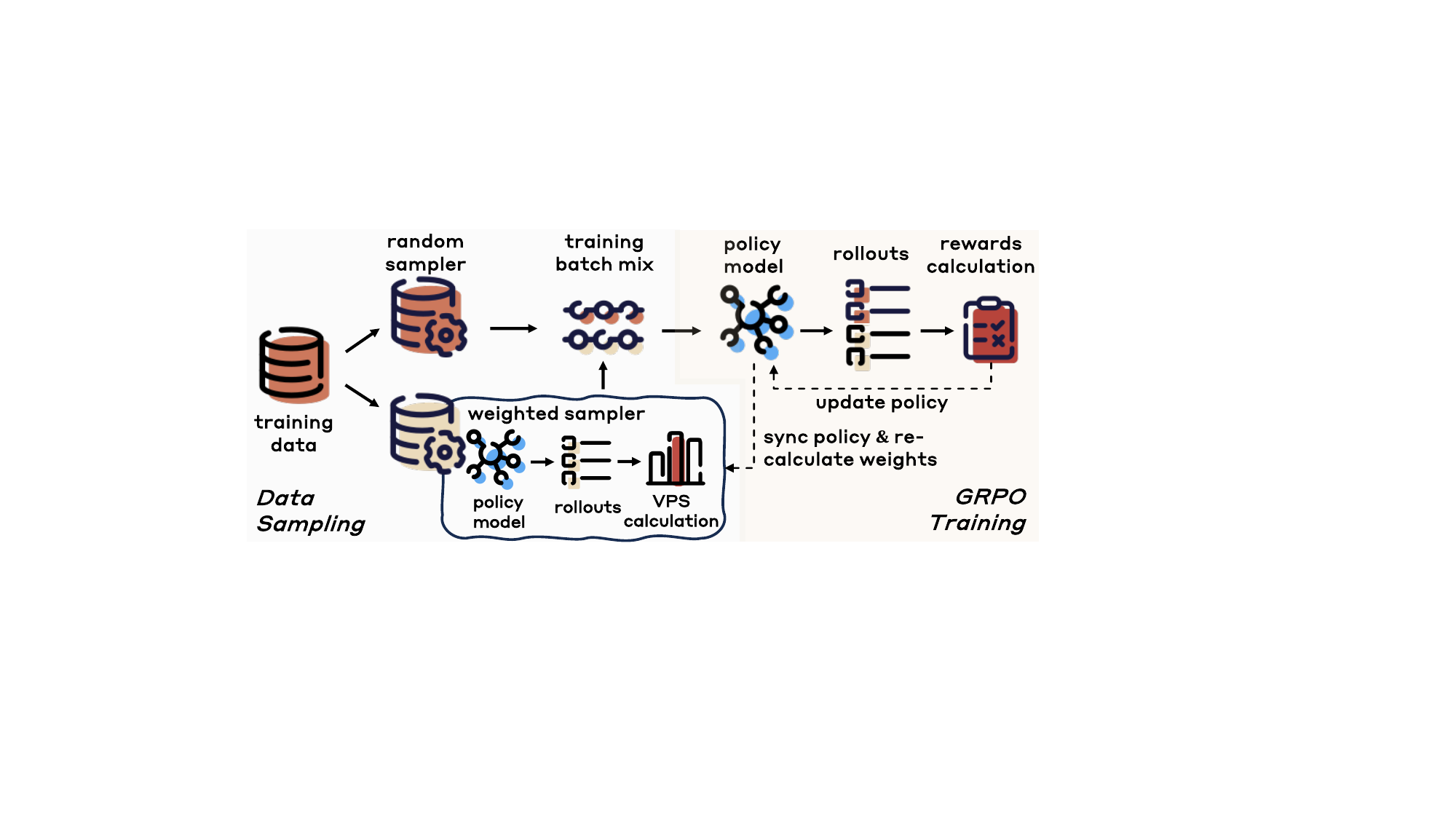}
    \caption{\textbf{Overview of the Variance-Aware Sampling (VAS) framework.}}
    \label{fig:main}
\end{figure}

In this work, we introduce \textbf{V}ariance-\textbf{A}ware \textbf{S}ampling (\textbf{VAS}), a dynamic data selection strategy designed to mitigate gradient vanishing in GRPO-based training for multimodal reasoning models. 
Our approach is grounded in the theoretical insight that \textit{reward variance provides a lower bound on the expected policy gradient magnitude}. Increasing reward variance, therefore, offers a principled means to stabilize training and strengthen policy optimization.
Specifically, VAS employs the \textbf{V}ariance \textbf{P}romotion \textbf{S}core (\textbf{VPS}), which evaluates each prompt's potential to induce reward variance. 
VPS consists of two complementary components: the \textbf{O}utcome \textbf{V}ariance \textbf{S}core (\textbf{OVS}), which favors prompts yielding a balanced mix of correct and incorrect responses to maximize expected reward variance, and the \textbf{T}rajectory \textbf{D}iversity \textbf{S}core (\textbf{TDS}), which encourages diversity among reasoning trajectories, thereby raising the lower bound of variance and sustaining informative gradient signals even under sparse or noisy correctness feedback.
Depicted in Figure~\ref{fig:main}, VAS constructs each training batch from two subsets: one sampled with probabilities proportional to VPS, emphasizing prompts with higher potential to induce reward variance, and another drawn uniformly at random to maintain broad data coverage. This design aims to balance targeted promotion of reward variance with general exposure to the training distribution. By introducing outcome- and trajectory-level variability into GRPO’s group-based comparisons, VAS reduces the risk of weak gradients and contributes to more stable optimization.
A theoretical analysis is presented in \S\ref{sec:theory}.

We validate VAS on a range of multimodal mathematical and logical reasoning benchmarks.
Experiments show that VAS improves convergence, stability, and downstream performance. 
Ablation studies further demonstrate that OVS and TDS contribute complementary benefits: OVS enhances expected reward variance by balancing outcomes, while TDS increases trajectory diversity to support more consistent gradient updates. 
Beyond methodology, we curate and release large-scale datasets for both supervised fine-tuning and RL. The supervised dataset ($\sim$1.6M) emphasizes long chain-of-thought reasoning paired with verified short answers, while the RL dataset ($\sim$15k) is constructed to capture diverse levels of difficulty and domain coverage. Both datasets are curated with explicit attention to quality, difficulty, and diversity, ensuring their value for multimodal reasoning research. Together with these resources, we provide a reproducible codebase and open models at multiple scales, offering standardized baselines for future research.

\section{Related Work}\label{sec:related_work}

Building on the success of rule-based RL~\citep{deepseekai2025deepseekr1,kimiteam2025kimik15}, recent multimodal work explores RL with verifiable rewards, typically following a pipeline that conducts optional SFT activation then applies RL~\citep{schulman2017ppo,ahmadian2024back}, such as GRPO~\citep{shao2024deepseekmath}, with fine-grained training recipes. 
In the multi-modal domain, various approaches refine this pipeline through specific reward design~\citep{tan2025reasonrft,shen2025vlmr1}, sample diversification~\citep{meng2025mmeureka,wang2025vlrethinker}, hyperparameter tuning~\citep{huang2025visionr1,tan2025reasonrft,yang2025r1onevision}, and advanced RL strategies~\citep{peng2025lmmr1,deng2025openvlthinker,zhang2025r1vl}.
Nevertheless, they often overlook the gradient vanishing problem inherent in GRPO-based training, resulting in unstable optimization and slow convergence~\citep{razin2025what}. 
Some studies attempt to alleviate it by filtering samples with moderate pass rates~\citep{wang2025vlrethinker,meng2025mmeureka}, yet they remain largely heuristic and lack comprehensive experimental validation or theoretical grounding.

Recent studies have examined gradient vanishing in a principled manner, analyzing how training objectives degrade under reward sparsity, variance reduction, and optimization dynamics~\citep{liu2025understanding,hu2025reinforce++,vassoyan2025ignore,zhou2025q}. A range of remedies has been proposed, including reward rescaling~\citep{li2024remax,huang2025lean}, entropy regularization~\citep{liu2024improving}, and improved sample selection~\citep{wang2024cpl,zhang2025focused,li2025limr}, which primarily operate by adjusting RL algorithms or reward mechanisms. In contrast, our work takes an orthogonal perspective by mitigating gradient vanishing through data sampling during training. Supported by both theoretical grounding and extensive empirical validation, our approach complements existing GRPO variants~\citep{wang2024offline,liu2024improving,hu2025reinforce++,Yue2025VAPOEA} and can be naturally combined with them to further improve training stability and effectiveness in reasoning.

\section{Variance-Aware Sampling Framework}\label{sec:method}

This section details the Variance-Aware Sampling (VAS) framework. \S\ref{sec:score_definition} defines the Variance Promotion Score (VPS), comprising Outcome Variance Score (OVS) and Trajectory Diversity Score (TDS), which guides dynamic data sampling. \S\ref{sec:sampler_implementation} then outlines the sampler implementation, including VPS updates and sample selection during training.

\subsection{Variance Promotion Score}
\label{sec:score_definition}
Let $x$ be a prompt with ground-truth answer $\bar{y}$ from the training set.
In GRPO framework, the model generates $N$ responses $\{y_i\}_{i=1}^N$ for each $x$. 
A task-specific verifier $\text{V}(x, y_i, \bar{y}) \in \{0,1\}$ evaluates each response, returning $1$ if $y_i$ matches $\bar{y}$ and $0$ otherwise.
The \textbf{pass rate} for $x$ is then defined as:
\[
\text{P}(x) = \frac{1}{N} \sum_{i=1}^N \text{V}(x, y_i, \bar{y}).
\]
Inspired by \citet{foster2025learning}, we directly calculate the Outcome Variance Score (OVS) as:
\[
\text{OVS}(x) = \text{P}(x)(1 - \text{P}(x)),
\]
which corresponds to the Bernoulli variance of correctness across responses. It is maximized at $\text{P}(x)=0.5$, where correct and incorrect outputs are balanced.
We further define the Trajectory Diversity Score (TDS) to characterize variability in reasoning processes. Let $\text{Diversity}(\{y_i\}_{i=1}^N)$ be a diversity function over sequences (\eg inverse self-BLEU or distinct-n):
\[
\text{TDS}(x) = \text{Diversity}(\{y_i\}_{i=1}^N),
\]
where a higher value reflects greater diversity among sampled trajectories.
The overall Variance Promotion Score (VPS) is computed as a weighted combination of OVS and TDS:
\[
\text{VPS}(x) = \alpha \cdot \text{OVS}(x) + \beta \cdot \text{TDS}(x),
\]
where $\alpha, \beta > 0$ balance their contributions. OVS increases the expected reward variance, while TDS provides a lower bound by encouraging trajectory diversity. Together, they are intended to strengthen the magnitude and consistency of gradient signals in GRPO training.

\subsection{Dynamic Sampler}
\label{sec:sampler_implementation}
The dynamic sampler prioritizes prompts with higher VPS, which are expected to induce greater reward variance during training.
At each sampling step, the training batch is constructed from two subsets: one from a weighted sampler based on VPS and another from a uniform random sampler.
A mix ratio hyperparameter $\lambda \in [0, 1]$ controls the proportion of samples drawn from each source, with $\lambda$ specifying the fraction of the batch selected from the weighted sampler.
VPS scores are periodically updated to reflect changes in the policy. After every $T_\text{update}$ steps, $N$ responses are resampled for each prompt, and OVS and TDS are recomputed. The update interval $T_\text{update}$ trades off computational cost against adaptation speed.
Algorithm~\ref{alg:vas_sampler} summarizes the VAS procedure.

\begin{algorithm}[h]
\caption{Variance-Aware Sampling (VAS) for GRPO Training}
\label{alg:vas_sampler}
\footnotesize
\begin{algorithmic}[1]
\Require Dataset $\mathcal{D}$; batch size $B$; rollouts per prompt $N$; VPS update interval $T_{\text{update}}$; mix ratio $\lambda\!\in\![0,1]$
\State \textbf{Initialize} policy parameters $\theta$
\For{each prompt $x \in \mathcal{D}$} \Comment{Initial VPS estimation}
    \State Sample $N$ rollouts $\{y_i\}_{i=1}^{N}$ from $\pi_\theta(\cdot\mid x)$
    \State Compute pass rate $P(x)$, $\mathrm{OVS}(x)$, $\mathrm{TDS}(x)$, and $\mathrm{VPS}(x)$
\EndFor
\For{training step $t=1,\dots,T$}
    \If{$t \bmod T_{\text{update}}=0$} \Comment{Periodic VPS refresh}
        \For{each prompt $x \in \mathcal{D}$}
            \State Sample $N$ rollouts $\{y_i\}_{i=1}^{N}$; update $P(x)$, $\mathrm{OVS}(x)$, $\mathrm{TDS}(x)$, $\mathrm{VPS}(x)$
        \EndFor
    \EndIf
    \State $B_{\mathrm{w}}\!\leftarrow\!\lfloor \lambda B \rfloor,\quad B_{\mathrm{r}}\!\leftarrow\! B-B_{\mathrm{w}}$
    \State Sample $B_{\mathrm{w}}$ prompts from $\mathcal{D}$ \emph{with replacement} proportional to $\mathrm{VPS}(\cdot)$
    \State Sample $B_{\mathrm{r}}$ prompts from $\mathcal{D}$ uniformly at random
    \State $\mathcal{B}\!\leftarrow$ union of the two sets \Comment{Construct training batch}
    \For{each $x \in \mathcal{B}$}
        \State Sample $N$ rollouts $\{y_i\}_{i=1}^{N}$ from $\pi_\theta(\cdot\mid x)$
        \State Compute GRPO loss $\mathcal{L}_{\mathrm{GRPO}}(x,\{y_i\})$ 
    \EndFor
    \State Update $\theta \leftarrow \theta - \eta \nabla_\theta \sum_{x \in \mathcal{B}} \mathcal{L}_{\mathrm{GRPO}}(x,\{y_i\})$
\EndFor
\end{algorithmic}
\end{algorithm}




\section{Theory}\label{sec:theory}

This section formalizes the intuition that prompts with higher \emph{reward variance} yield more informative policy-gradient updates. We first establish a \textit{Variance–Progress Theorem} for the vanilla REINFORCE algorithm~\citep{williams1992reinforce}, showing that expected improvement is linearly lower-bounded by the reward variance of the prompt. We then present a two-level decomposition of reward variance that directly motivates the \textit{Outcome Variance Score} and \textit{Trajectory Diversity Score} in our method. Finally, we extend the analysis to GRPO, with complete proofs and derivations provided in Appendix~\ref{apdx:sec:proofs}.

\subsection{Preliminaries}\label{sec:prelims}
Let $x\in\mathcal X$ be a prompt and $y\in\mathcal Y$ a response drawn from the policy $\pi_\theta(y\mid x)$. A learned reward model $R:\mathcal X\times\mathcal Y\to[-1,1]$ assigns a scalar reward. The optimization objective is defined as:
\begin{equation}
    J(\theta)=\mathbb{E}_{x\sim\mathcal D}\;
              \mathbb{E}_{y\sim\pi_\theta(\cdot\mid x)}
              \,[R(x,y)],
\end{equation}
where $\mathcal D$ is the prompt distribution. Here, we omit the KL regularization, as it only rescales constants and does not affect the variance argument.  

\paragraph{Score-function identity.}  
For a fixed prompt $x$, the gradient can be expressed as:
\begin{equation}
    \nabla_\theta J_x(\theta) =\mathbb{E}_{y\sim\pi_\theta}\bigl[R(x,y)\,g(x,y)\bigr],
    \qquad g(x,y)=\nabla_\theta\log\pi_\theta(y\mid x),
\end{equation}
where $g(x,y)$ is the score function and $\mathbb E_{y}[g(x,y)]=0$.  

\paragraph{Baselines and variance.}  
Subtracting a prompt-dependent baseline $b(x)$ keeps the estimator unbiased:
\begin{equation}
    G(x)=\mathbb{E}_{y}[g(x,y)(R(x,y)-b(x))],
    \qquad \mathbb E[G(x)]=\nabla_\theta J_x(\theta).
\end{equation}
The variance is minimized when $b^\star(x)=\mathbb E_{y}[R(x,y)]$. Under this choice, the variance factorizes as (Lemma A.1):
\begin{equation}
    \operatorname{Var}[G(x)]
        =\operatorname{Var}_{y}[R(x,y)]\;\Gamma_\theta(x),
\end{equation}
where $\Gamma_\theta(x)=\mathbb E_{y}\|g(x,y)\|^2$ is a Fisher-information term depending on the policy but not on the rewards. Proposition A.2 shows $\Gamma_\theta(x)$ is bounded above and below by model-dependent constants. Thus, the \emph{reward variance} is the only prompt-dependent factor controlling the dispersion of stochastic gradients.  

\subsection{A Variance–Progress Theorem for REINFORCE}\label{sec:thm}
Consider a single gradient step $\theta^{+}=\theta+\eta G(x)$ on prompt $x$, with learning rate $\eta>0$.  

\begin{assumption}[Smoothness and gradient lower bound]\label{as:smooth}
For each prompt $x$:  
(i) $J_x(\theta)$ is twice differentiable with $\|\nabla^2_\theta J_x(\theta)\|\le L$ for some $L>0$;  
(ii) There exists $c_{\min}>0$ such that $\|\nabla_\theta J_x(\theta)\|^2 \ge c_{\min}\,\operatorname{Var}_{y}[R(x,y)]$.  
\end{assumption}

The second condition links reward variance to the squared gradient norm. It follows from the positive definiteness of the Fisher information matrix under mild regularity conditions.  

\begin{theorem}[Variance–Progress]\label{thm:var-progress}
Let Assumption~\ref{as:smooth} hold and use the optimal baseline $b^\star(x)$. Then for any step size  
$0<\eta \le \tfrac{c_{\min}}{4L\Gamma_\theta(x)},$  
the expected one-step gain satisfies
\begin{equation}
    \mathbb E\bigl[J_x(\theta^{+})-J_x(\theta)\bigr]
    \;\ge\;\frac{\eta\,c_{\min}}{4}\,
    \operatorname{Var}_{y}[R(x,y)].
\end{equation}
\end{theorem}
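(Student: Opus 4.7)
My plan is to combine three ingredients: (i) the descent lemma that follows from $L$-smoothness of $J_x$ (Assumption~\ref{as:smooth}(i)), (ii) the variance decomposition $\mathbb{E}\|G(x)\|^{2}=\|\nabla_\theta J_x(\theta)\|^{2}+\operatorname{Var}[G(x)]$ together with the factorization $\operatorname{Var}[G(x)]=\operatorname{Var}_y[R(x,y)]\,\Gamma_\theta(x)$ supplied by Lemma~A.1 under the optimal baseline $b^\star(x)$, and (iii) the prompt-level gradient lower bound in Assumption~\ref{as:smooth}(ii). First, I would apply the descent lemma to the one-step update $\theta^{+}=\theta+\eta G(x)$ to obtain the pointwise inequality
$$J_x(\theta^{+})\;\ge\;J_x(\theta)+\eta\,\langle\nabla_\theta J_x(\theta),G(x)\rangle-\tfrac{L\eta^{2}}{2}\|G(x)\|^{2}.$$

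Next I would take expectation over $y\sim\pi_\theta(\cdot\mid x)$, which is the sole source of randomness in $G(x)$. Unbiasedness $\mathbb{E}[G(x)]=\nabla_\theta J_x(\theta)$ turns the inner-product term into $\|\nabla_\theta J_x(\theta)\|^{2}$, and substituting the variance identity into the squared-norm term yields
$$\mathbb{E}\bigl[J_x(\theta^{+})-J_x(\theta)\bigr]\;\ge\;\eta\bigl(1-\tfrac{L\eta}{2}\bigr)\|\nabla_\theta J_x(\theta)\|^{2}-\tfrac{L\eta^{2}}{2}\,\operatorname{Var}_y[R(x,y)]\,\Gamma_\theta(x).$$
The final step is to invoke Assumption~\ref{as:smooth}(ii) in the form $\|\nabla_\theta J_x(\theta)\|^{2}\ge c_{\min}\operatorname{Var}_y[R(x,y)]$, factor out the common $\operatorname{Var}_y[R(x,y)]$, and reduce the claim to a scalar inequality in $\eta$. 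A short algebraic check then shows that any $\eta\le c_{\min}/(4L\Gamma_\theta(x))$ leaves a residual of at least $\eta c_{\min}/4$ on the right, which is exactly the stated bound.

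\textbf{Main obstacle.} The most delicate part is the constant-chasing at the very end: the admissible step size must simultaneously absorb the standard $O(\eta^{2})$ smoothness correction on $\|\nabla_\theta J_x(\theta)\|^{2}$ and the additional variance-induced penalty $\tfrac{L\eta^{2}}{2}\operatorname{Var}_y[R(x,y)]\Gamma_\theta(x)$, while still preserving the coefficient $c_{\min}/4$ on the right. Making this balance work cleanly appears to require a mild side condition such as $\Gamma_\theta(x)\gtrsim c_{\min}$, which is reasonable in light of the upper and lower bounds on $\Gamma_\theta(x)$ from Proposition~A.2 under bounded rewards. A secondary subtlety is that the factorization in Lemma~A.1 is tied specifically to the optimal baseline $b^\star(x)$, so the descent argument must use exactly that estimator; any other choice of baseline would introduce an extra cross term and break the clean $\operatorname{Var}_y[R(x,y)]\cdot\Gamma_\theta(x)$ decomposition that drives the entire bound.
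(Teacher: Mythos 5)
Your proof follows essentially the same route as the paper's own argument: a descent lemma from $L$-smoothness, the bias--variance decomposition $\mathbb E\|G(x)\|^{2}=\|\nabla_\theta J_x(\theta)\|^{2}+\operatorname{Var}[G(x)]$ combined with the variance factorization (the appendix, via Lemma~\ref{lem:var-bounds}, actually uses the cruder upper bound $d\,G_{\max}^{2}\operatorname{Var}_y[R]$ in place of your exact $\Gamma_\theta(x)\operatorname{Var}_y[R]$, but the structure and the final constant-chasing are identical), then Assumption~\ref{as:smooth}(ii) and a step-size restriction. The ``mild side condition'' $\Gamma_\theta(x)\gtrsim c_{\min}$ that you flag as the main obstacle is in fact automatic and needs no extra hypothesis: by Cauchy--Schwarz, $\|\nabla_\theta J_x(\theta)\|^{2}=\|\mathbb E_y[g\,R_{\mathrm{res}}]\|^{2}\le \Gamma_\theta(x)\operatorname{Var}_y[R(x,y)]$, so Assumption~\ref{as:smooth}(ii) already forces $c_{\min}\le\Gamma_\theta(x)$, under which your bracket evaluates to at least $\tfrac{3}{4}\eta c_{\min}\operatorname{Var}_y[R(x,y)]$ and the claimed bound follows.
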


\begin{tcolorbox}[colback=gray!5,colframe=gray!40,title=Intuition]
\footnotesize
Prompts with higher reward variance provide stronger gradient signals. 
If all rollouts have similar rewards, gradients vanish; if outcomes or reasoning paths vary, the variance ensures progress per update.
\end{tcolorbox}

\paragraph{Sketch of proof.}  
A second-order Taylor expansion yields $J_x(\theta^{+})=J_x(\theta)+\langle\nabla J_x,G(x)\rangle+\tfrac12 G^\top H G$. Taking expectations and using unbiasedness gives a linear term $\eta\|\nabla J_x\|^2$. The quadratic remainder is bounded by $\tfrac12\eta^2 L\,\mathbb E\|G\|^2$. Substituting the variance factorization and the gradient lower bound, and restricting $\eta$ as stated, ensures the remainder is at most half the linear term, giving the result. The full details of the proof are provided in Appendix~\ref{apdx:ssec:theorem-proof}.  

\subsection{Variance Decomposition and Connection to OVS/TDS}
For binary rewards, the variance admits a natural two-level decomposition.  
Write each rollout $y=(y_{\mathrm{cot}},y_{\mathrm{ans}})$ as a reasoning chain and its final answer, and let the verifier assign $R(x,y)=\mathbf 1_{\text{correct}(y_{\mathrm{ans}})}$. Define $Z=\varphi(y_{\mathrm{cot}})$ as a representation of the reasoning chain and $p_Z(x)=\Pr(R=1\mid Z)$. The law of total variance gives
\begin{equation}
    \operatorname{Var}_{y}[R(x,y)]
      = \underbrace{\mathbb E_{Z}[p_Z(x)(1-p_Z(x))]}_{\text{intra-trajectory}}
      + \underbrace{\operatorname{Var}_{Z}[p_Z(x)]}_{\text{inter-trajectory}}.
\end{equation}
The first term corresponds to variability in correctness given a fixed reasoning path, and is estimated by $\hat p(x)(1-\hat p(x))$. This motivates the \emph{Outcome Variance Score (OVS)}. The second term measures variation across reasoning paths, which can be lower-bounded by diversity metrics such as normalized edit distance or self-BLEU dispersion (detailed in Appendix~\ref{apdx:ssec:two-level-decomp}). This motivates the \emph{Trajectory Diversity Score (TDS)}. Together, OVS and TDS provide complementary mechanisms to raise reward variance, directly connecting the theory to our method in \S\ref{sec:method}.  

\subsection{From REINFORCE to GRPO}\label{sec:grpo-connection}
GRPO extends REINFORCE by normalizing rewards within each group of rollouts. For prompt $x$, the group mean reward $\bar r(x)$ serves as a baseline, and the sample standard deviation $s(x)$ whitens centered rewards:
\[
\tilde R(x,y_i) = \frac{R(x,y_i)-\bar r(x)}{s(x)+\delta}.
\]
The gradient estimator then multiplies $\tilde R$ by the importance ratio $r_\theta(y_i\mid x)$, optionally clipped to $1\pm\varepsilon$ to control KL divergence.  

Without clipping, the estimator remains unbiased, and Theorem~\ref{thm:var-progress} applies with a rescaled learning rate bound reflecting the whitening factor. With clipping, the estimator acquires a bias of order $O(\varepsilon)$, which reduces but does not eliminate the lower bound. Thus, under both settings, prompts with higher reward variance continue to guarantee larger provable minimum improvements per update. Details and finite-sample corrections are provided in Appendix~\ref{apdx:ssec:grpo}.

\section{Data Curation}\label{sec:data_curation}
Following prior work~\citep{tan2025reasonrft,wang2025vlrethinker,meng2025mmeureka}, our training pipeline includes two stages: a supervised cold-start stage followed by a reinforcement learning stage using GRPO, with each stage supported by meticulously curated datasets.

\definecolor{oursrow_orig}{HTML}{BFBFBA}
\colorlet{oursrow}{oursrow_orig!30}

\begin{table}[t]
\caption{Cold-start Data Statistics.} 
\label{tab:sft_data}
\centering
\renewcommand{\arraystretch}{1.2}
\setlength{\tabcolsep}{0.6mm}
\setlength{\aboverulesep}{0pt}
\setlength{\belowrulesep}{0pt}
\begin{tabularx}{\textwidth}{@{}cc>{\raggedright\arraybackslash}X@{}} 
\toprule
\textbf{Domain} & \textbf{\# Samples} & \textbf{ Datasets} \\ \midrule
Math & 1.6M & MM-MathInstruct~\citep{Wang2025MathCoderVLBV}, MathV360K~\citep{Shi2024MathLLaVABM}, MultiMath~\citep{Peng2024MultiMathBV}, MATH~\citep{Hendrycks2021MeasuringMP}, MathVision~\citep{Wang2024MeasuringMM}, CLEVR-Math~\citep{Lindstrm2022CLEVRMathAD}, IconQA~\citep{Lu2021IconQAAN}, MAVIS-Instruct~\citep{Zhang2024MAVISMV}, RCoT~\citep{Deng2024TheoremValidatedRC}, GeoQA+~\citep{Cao2022AnAB}, Geometry3K~\citep{Lu2021InterGPSIG}, GeomVerse~\citep{Kazemi2023GeomVerseAS}, {\color{gray}Self-collected data} \\  \hdashline
\rowcolor{oursrow}
Science & 13K & ScienceQA~\citep{Lu2022LearnTE}, AI2D~\citep{kembhavi2016diagram}, CLEVR~\citep{Johnson2016CLEVRAD}, {\color{gray}Self-collected data} \\  \hdashline
Chart-Figure & 8K & ChartQA~\citep{Masry2022ChartQAAB}, SPIQA~\citep{Pramanick2024SPIQAAD}, DVQA~\citep{Kafle2018DVQAUD}, PlotQA~\citep{Methani2019PlotQARO} \\  \hdashline
\rowcolor{oursrow}
Doc-Table & 8K & TableMWP~\citep{Lu2022DynamicPL}, InfoVQA~\citep{Mathew2021InfographicVQA}, DocVQA~\citep{Mathew2020DocVQAAD}, WikiTableQuestions~\citep{Pasupat2015CompositionalSP}, VisualMRC~\citep{Tanaka2021VisualMRCMR} \\  \hdashline
General & 8K & Sherlock~\citep{hesselhwang2022abduction}, A-OKVQA~\citep{Schwenk2022AOKVQAAB}, PISC~\citep{Li2017DualGlanceMF}, GQA~\citep{Hudson2019GQAAN} \\ 
\bottomrule
\end{tabularx}
\end{table}

\subsection{Cold-start Data}
\label{ssec:cold_start}

This stage utilizes long chain-of-thought (CoT) data for supervised fine-tuning. As summarized in Table~\ref{tab:sft_data}, we collect question-answer pairs from diverse instruction-tuning corpora, which primarily feature short answers or short CoTs. 
To address the limited coverage, particularly in science, in existing multimodal datasets, we supplement with publicly available practice problems and exams from biology, chemistry, geography, and physics.
The final dataset spans five domain categories: \textit{Math}, \textit{General}, \textit{Chart-Figure}, \textit{Doc-Table}, and \textit{Science}.
To ensure balanced quality and difficulty, we apply filtering based on response correctness. For each question, multiple responses are generated via Qwen2.5-VL-72B~\citep{bai2025qwen25vl} and verified by GPT-4o~\citep{openai2024gpt4o}. Samples are categorized by \textit{pass rate}: easy ($\geq 0.8$), hard ($\leq 0.2$), or medium (otherwise). Only medium and hard cases are retained for cold-start. Long CoT annotations are then produced using Gemini 2.5 Pro/Flash~\citep{google2025gemini25pro}, which generates multi-step rationales followed by final answers. Annotations are preserved only when the final answer matches ground truth as validated by GPT-4o.


\subsection{RL Data}
\label{ssec:rl_data}
This stage employs prompts annotated with \textit{concise}, \textit{verifiable} short answers suitable for reward modeling. Specifically, we adopt GPT-4o to extract and rephrase open-form short answers from the original CoT annotations. 
The RL dataset integrates \textbf{two} complementary components. 
First, we select $8$k math problems from the cold-start stage, retaining only hard-level items with low pass rates to ensure challenging supervision.
These problems are further categorized into fine-grained types with GPT-4o (detailed in Appendix~\ref{apdx:ssec:math_types}) and uniformly sampled to ensure balanced coverage across categories.
Second, we add $7$k logical reasoning problems from Raven~\citep{Zhang2019RAVENAD}, MM-IQ~\citep{Cai2025MMIQBH}, and EasyArc~\citep{Unsal2025EasyARCEV}, curated to incentivize general reasoning ability beyond math. 
Together, these components form a $15$k RL dataset emphasizing difficulty, diversity, and balanced coverage across mathematical and logical reasoning tasks.


\section{Experiments}

\begin{table}[t]
\centering                     
\renewcommand{\arraystretch}{1.2}  
\setlength{\tabcolsep}{1.8mm}      
\footnotesize         
\caption{Comparison of \ours with other MLLMs on mathematics-related benchmarks. All models are reevaluated under identical conditions for fairness; values in parentheses are taken from the original papers. The \textbf{bold} and \underline{underline} highlight the best and second-best scores, respectively.}
\adjustbox{max width=1.0\textwidth}{
\begin{tabular}{l c llllc c}
\toprule
\textbf{Model} & \textbf{Size} & \textbf{MathVerse} & \textbf{MathVista} & \textbf{MathVision} & \textbf{LogicVista} & \textbf{ChartQA} & \textbf{Avg} \\
\midrule
\rowcolor[HTML]{F5F5F5}
\multicolumn{8}{c}{General-Purpose Models}\\
\midrule
Qwen2.5-VL & 7B & 50.4 (49.2) & 69.3 (68.2) & 28.7 (25.1) & 44.0 & 82.4 & 55.0 \\
InternVL2.5 & 8B & 40.0 (39.5) & 61.4 (64.4) & 19.9 (19.7) & 37.7 (36.0) & 73.4 & 46.5 \\
InternVL3 & 8B & 49.4 (39.8) & 68.5 (71.6) & 30.0 (29.3) & 41.3 (44.1) & 81.3 & 54.1 \\
LLaVA-OV & 7B & 33.6 (26.2) & 56.4 (63.2) & 15.9 & 30.6 & 65.0 & 40.3 \\
\midrule
\rowcolor[HTML]{F5F5F5}
\multicolumn{8}{c}{Reasoning-Oriented Models}\\
\midrule
MM-Eureka & 8B & 52.3 (50.3) & 73.4 (73.0) & 29.4 (26.9) & \underline{47.1} & 82.7 & 57.0 \\
R1-VL & 7B & 41.3 (40.0) & 61.5 (63.5) & 23.0 (24.7) & 36.3 & 76.3 & 47.7 \\
R1-OneVision & 7B & 44.0 (46.4) & 60.3 (64.1) & 22.0 (29.9) & 40.0 & 72.5 & 47.8 \\
OpenVLThinker & 7B & 48.1 (47.9) & 70.6 (70.2) & 22.0 (25.3) & 41.0 & 81.0 & 52.5 \\
VL-Rethinker & 7B & \underline{54.6} (54.2) & \underline{73.7} (74.9) & 30.1 (32.3) & 45.7 & \underline{83.5} & 57.5 \\
Vision-R1 & 7B & 51.9 (52.4) & 72.1 (73.5) & -- & 44.7 & 82.7 & -- \\
ThinkLite-VL & 7B & 51.3 (50.7) & 72.5 (75.1) & 27.5 & 44.3 & 83.1 & 55.7 \\
VL-Cogito & 7B & 54.3 & \textbf{74.8} & \underline{30.7} & \textbf{48.9} & 83.4 & \underline{58.2} \\
\midrule
\rowcolor[HTML]{E8F5E9} \ours & 3B & 47.9 & 67.1 & 25.4 & 42.0 & 81.2 & 52.7 \\
\rowcolor[HTML]{E8F5E9} \ours & 7B & \textbf{55.4} & 72.0 & \textbf{31.8} & \textbf{48.9} & \textbf{83.7} & \textbf{58.4} \\
\bottomrule
\end{tabular}}
\label{tab:mm_selected}
\end{table}

\subsection{Experimental Setups}

\paragraph{Implementations.}

In the cold-start stage, we fine-tune Qwen2.5-VL-Instruct~\citep{bai2025qwen25vl} with curated long CoT data (\S\ref{ssec:cold_start}) using the LLaMA-Factory framework~\citep{zheng2024llamafactory}. Training runs for $5$ epochs with AdamW, a cosine schedule, an initial learning rate of $1\times10^{-5}$, and a $0.1$ warm-up ratio. The checkpoint with the best validation score is retained. This checkpoint initializes the policy for RL training, implemented with the EasyR1 codebase~\citep{zheng2025easyr1}. VAS (Algorithm~\ref{alg:vas_sampler}) is set to $N=32$, $\lambda=0.5$, $\alpha=0.8$, $\beta=0.2$, and $T_{\text{update}}=35$\footnote{Due to the resource constraint, experiments in ablation studies and analysis adopt $N=8$, $\lambda=0.5$, $\alpha=0.5$, $\beta=0.5$, and $T_{\text{update}}=28$ unless otherwise specified.}. Additional hyper-parameters for both cold-start and RL training are detailed in Appendix~\ref{app:hyper}.

\paragraph{Benchmarks.}
To evaluate \ours, we adopt five widely used and challenging benchmarks focusing on mathematical and logical reasoning: MathVerse~\citep{zhang2024mathverse}, MathVista~\citep{lu2024mathvista}, MathVision~\citep{wang2024measuring}, LogicVista~\citep{xiao2024logicvista}, and ChartQA~\citep{masry2022chartqa}. These benchmarks collectively assess diverse aspects of problem-solving, including complex multi-step mathematics, visual reasoning, logical deduction, and chart-based understanding.

\paragraph{Baselines.}
In this work, we compare \ours against a broad set of MLLMs, covering both general-purpose and reasoning-oriented designs of comparable model size. \textbf{General-purpose MLLMs}: Qwen2.5-VL-Instruct-7B~\citep{bai2025qwen25vl}, InternVL2.5-8B~\citep{chen2025internvl25}, InternVL3-8B~\citep{zhu2025internvl3}, and LLaVA-OneVision-7B (LLaVA-OV; \citet{li2024llava}), representing the recent state-of-the-art general-purpose MLLMs. \textbf{Reasoning-oriented MLLMs}: VL-Cogito-7B~\citep{Yuan2025VLCogitoPC}, MM-Eureka-8B~\citep{meng2025mm}, R1-VL-7B~\citep{zhang2025r1}, R1-OneVision-7B~\citep{yang2025r1}, OpenVLThinker-7B~\citep{deng2025openvlthinker}, Vision-R1-7B~\citep{huang2025vision}, VL-Rethinker~\citep{wang2025vl}, and ThinkLite-VL-7B~\citep{wang2025sota}.

\paragraph{Evaluation.}
We adopt a unified prompt across all evaluations, requiring models to enclose final answers in ``\texttt{\textbackslash box\{\}}'' (full prompt in Appendix~\ref{apdx:sys:prompt}). Inference is performed using vLLM~\citep{kwon2023efficient} for efficient generation. For benchmarks with official protocols (\eg MathVision, MMMU), we strictly follow the original procedures. For others, mathematical questions are assessed with Math-Verify~\citep{mathverify} and MathRuler~\citep{mathruler}, while non-mathematical ones use exact matching. To ensure robustness, we further (1) select the most semantically similar option when multiple-choice answers do not exactly match any candidate, and (2) employ GPT-4o~\citep{openai2024gpt4o} as an auxiliary judge for open-ended questions where exact matching or extraction fails.

\subsection{Main Results}
As shown in Table~\ref{tab:mm_selected}, our 7B model achieves state-of-the-art performance among reasoning-oriented MLLMs, reaching an average score of \textbf{58.4}, the highest across all evaluated models. It ranks first on most benchmarks, including \textbf{MathVerse} (55.4), \textbf{MathVision} (31.8), \textbf{LogicVista} (48.9), and \textbf{ChartQA} (83.7), while also delivering competitive results on \textbf{MathVista} (72.0). Compared to other general models with similar scales, our approach consistently yields superior results. 

In addition, the 3B variant of our model demonstrates strong competitiveness with an average of 52.7. Despite its smaller scale, it matches or surpasses several 7B models (e.g., R1-VL at 47.7 and R1-OneVision at 47.8), underscoring the efficiency of our framework in resource-constrained settings. 

These results highlight the complementary contributions of our pipeline: carefully curated long CoT supervision for cold-start initialization, reinforcement learning to incentivize deeper reasoning, and Variance-Aware Sampling (VAS) to stabilize optimization. Together, they enable consistent improvements in reasoning performance across both small- and large-scale models.

\subsection{Effect of Cold-Start and VAS}
\begin{table}[t]
\centering                
\renewcommand{\arraystretch}{1.2}  
\setlength{\tabcolsep}{1.8mm}      
\footnotesize         
\caption{The effect of Cold-start SFT and Variance-Aware Sampling (VAS).}
\adjustbox{max width=1.0\textwidth}{
\begin{tabular}{l ccccc c}
\toprule
\textbf{Model} & \textbf{MathVerse} & \textbf{MathVista} & \textbf{MathVision} & \textbf{LogicVista} & \textbf{ChartQA} & \textbf{Avg} \\
\midrule
Qwen2.5-VL-3B & 40.4 & 63.5 & 24.3 & 38.4 & 76.8 & 48.7 \\
\rowcolor[HTML]{F5F5F5} +Cold-start & 42.1 & 58.0 & 25.2 & 39.5 & 78.5 & 48.7 \\
\rowcolor[HTML]{F5F5F5} +GRPO & 46.2 & 65.6 & 24.7 & 42.4 & 79.9 & 51.8 \\
\rowcolor[HTML]{E8F5E9} +VAS (\ours) & 47.9 & 67.1 & 25.4 & 43.1 & 81.2 & 52.9 \\
\bottomrule
\end{tabular}}
\label{tab:math_abla_cold_vas}
\end{table}

Table~\ref{tab:math_abla_cold_vas} reports the effect of cold-start supervision and the proposed VAS strategy on Qwen2.5-VL-3B across several mathematics-related benchmarks. Beginning with the base model, cold-start fine-tuning on curated long-form CoT data yields consistent improvements, particularly on MathVerse and ChartQA. 
Building upon this, GRPO further enhances performance, surpassing the cold-start baseline and demonstrating that reinforcement learning effectively incentivizes exploration and strengthens reasoning.
The introduction of VAS (MMR1) achieves the highest overall scores, delivering notable gains on MathVerse, MathVista, and LogicVista.
Collectively, these results highlight the complementary roles of different components: (1) cold-start supervision provides a strong initialization by imitating high-quality reasoning trajectories; (2) reinforcement learning emphasizes exploratory behavior to further incentivize reasoning; and (3) VAS ensures stable, variance-aware training, thereby leading to more robust and effective learning outcomes.

\subsection{Effect of VAS Hyper-parameters}
We further investigate the sensitivity of VAS to its key hyperparameters, \ie the mixture ratio $\lambda$, VPS update frequency $T_{\text{update}}$, number of rollouts $N$, and the weighting between OVS and TDS in VPS computation. The results are summarized in Table~\ref{tab:vas_hyper}.

\paragraph{Mixture ratio.} 
The mixture ratio $\lambda$ controls the balance between VPS-weighted sampling and uniform random sampling in Algorithm~\ref{alg:vas_sampler} (Lines 12–14). Performance is generally robust across settings, with $\lambda=0.5$ yielding competitive and stable results. Extremely large ratios (\eg $\lambda=1.0$) reduce coverage of the overall dataset and lead to degraded performance, confirming the necessity of maintaining a balance between variance promotion and coverage.

\paragraph{Update frequency.} 
The update interval $T_{\text{update}}$ specifies the frequency at which VPS scores are refreshed. Short intervals (\eg 4 or 7 steps) enhance adaptability to model dynamics but incur higher computational overhead. Moderate intervals (\ie 14–35 steps) strike a favorable balance, consistently yielding robust performance. In contrast, excessively long intervals (\eg 56 steps) result in outdated VPS estimates, thereby weakening gradient signals and degrading overall performance.

\paragraph{Number of rollouts.} 
The rollout number $N$ affects the accuracy of variance estimation for OVS and TDS. Increasing $N$ from 8 to 16 provides marginal improvements by reducing sampling noise. However, further increases (\eg 32) offer limited gains while introducing higher computational costs.

\paragraph{VPS weighting.} 
The VPS ratio $(\alpha, \beta)$ balances outcome variance (OVS) and trajectory diversity (TDS). A balanced combination (\eg $\alpha=0.8$, $\beta=0.2$) consistently delivers strong results, whereas relying solely on one component leads to instability. Consistent with the analysis in \S\ref{sec:theory}, this shows that outcome variance and trajectory diversity play complementary roles: OVS captures correctness variability, while TDS safeguards a lower bound on variance when correctness signals are sparse.

Overall, the results demonstrate that VAS maintains stability across a wide range of hyperparameter settings, with moderate mixture ratios, appropriate update frequencies, and balanced VPS weighting yielding the most consistent performance improvements.

\begin{table}[t]
\centering
\renewcommand{\arraystretch}{1.2}
\setlength{\tabcolsep}{1.8mm}
\footnotesize
\caption{The effect of Variance-Aware Sampling hyper-parameters.}
\adjustbox{max width=1.0\textwidth}{
\begin{tabular}{l c ccccc c} 
\toprule
\textbf{Ablated Param.} & \textbf{Value} & \textbf{MathVerse} & \textbf{MathVista} & \textbf{MathVision} & \textbf{LogicVista} & \textbf{ChartQA} & \textbf{Avg} \\
\midrule
\rowcolor[HTML]{F5F5F5} Mixture ratio & 0.2 & 46.3 & 67.9 & 23.2 & 40.2 & 79.9 & 51.5 \\
\rowcolor[HTML]{F5F5F5}  & 0.5 & 46.1 & 66.4 & 24.8 & 41.7 & 79.4 & 51.7 \\
\rowcolor[HTML]{F5F5F5}  & 0.8 & 46.9 & 64.8 & 24.3 & 43.8 & 79.9 & 52.0 \\
\rowcolor[HTML]{F5F5F5}  & 1.0 & 44.6 & 65.3 & 24.9 & 43.8 & 79.9 & 51.7 \\
\hdashline
Update freq. & 4 & 47.4 & 65.8 & 23.8 & 39.3 & 79.6 & 51.2 \\
& 7 & 46.6 & 65.7 & 24.6 & 41.5 & 78.9 & 51.5 \\
& 14 & 46.7 & 66.9 & 24.2 & 44.6 & 79.2 & 52.3 \\
& 28 & 46.1 & 66.4 & 24.8 & 41.7 & 79.4 & 51.7 \\
& 35 & 47.6 & 66.1 & 24.5 & 42.4 & 80.1 & 52.2 \\
& 56 & 44.5 & 65.5 & 23.9 & 40.2 & 80.2 & 50.9 \\
\hdashline
\rowcolor[HTML]{F5F5F5} \# rollout & 8 & 46.1 & 66.4 & 24.8 & 41.7 & 79.4 & 51.7 \\
\rowcolor[HTML]{F5F5F5}  & 16 & 45.9 & 65.0 & 24.5 & 41.1 & 79.8 & 51.3 \\
\rowcolor[HTML]{F5F5F5}  & 32 & 46.7 & 65.1 & 25.0 & 42.2 & 78.9 & 51.6 \\
\hdashline
VPS ratio & (0.0, 1.0) & 46.8 & 64.4 & 24.3 & 40.9 & 78.8 & 51.0 \\
(OVS, TDS) & (0.2, 0.8) & 46.4 & 65.6 & 25.0 & 40.6 & 79.7 & 51.5 \\
& (0.5, 0.5) & 46.1 & 66.4 & 24.8 & 41.7 & 79.4 & 51.7 \\
& (0.8, 0.2) & 46.9 & 66.8 & 23.7 & 45.1 & 79.0 & 52.3 \\
& (1.0, 0.0) & 46.5 & 65.6 & 24.5 & 39.7 & 79.6 & 51.2 \\
\bottomrule
\end{tabular}
}
\label{tab:vas_hyper}
\end{table}

\subsection{Efficiency of VAS}

\begin{figure}[t]
    \centering
    \includegraphics[width=0.98\linewidth]{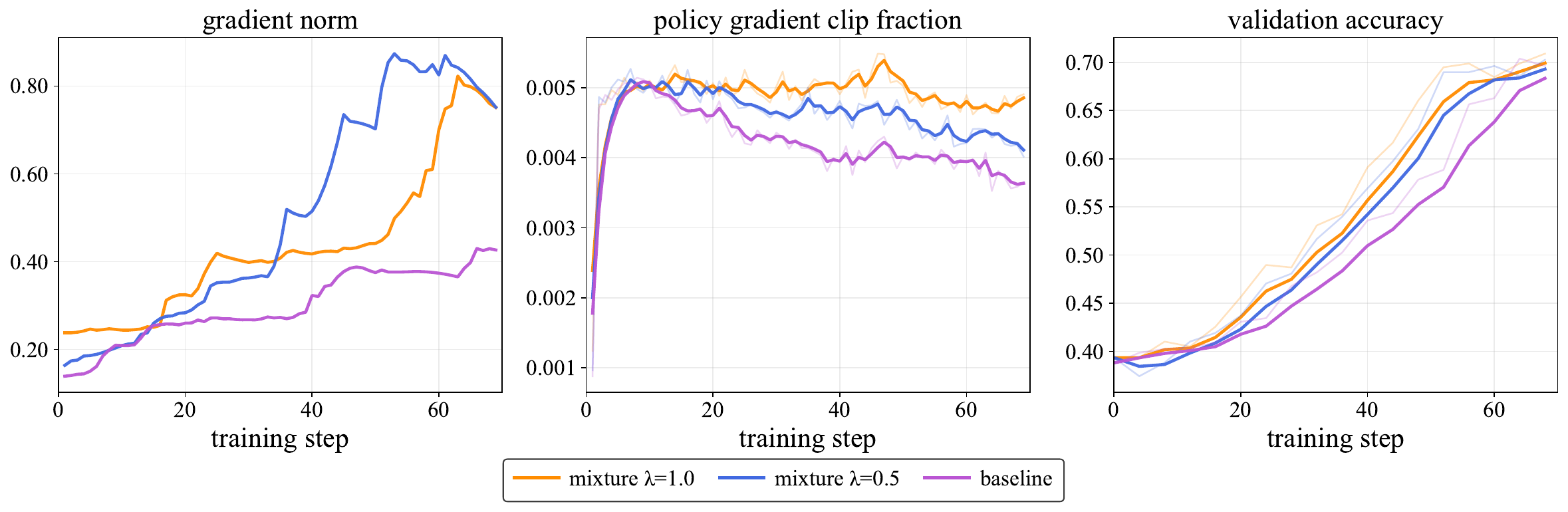}
    \caption{Training efficiency of Variance-Aware Sampling (VAS). 
The plots compare three settings: full VAS sampling ($\lambda=1.0$, \textcolor{orange}{orange}), mixed sampling with half VAS and half random ($\lambda=0.5$, \textcolor{blue}{blue}), and the vanilla baseline (\textcolor{purple}{purple}). 
\textbf{Left:} Actor gradient norm, reflecting the magnitude of gradient signals during training. 
\textbf{Middle:} Policy gradient clip fraction, indicating the proportion of updates reaching the clipping boundary. 
\textbf{Right:} Validation accuracy, showing convergence speed and final performance.}

    \label{fig:training_efficiency_vas}
\end{figure}

Figure~\ref{fig:training_efficiency_vas} presents a comparative analysis of the training efficiency of VAS under different mixture ratios (\textcolor{orange}{orange line}: $\lambda = 1.0$, \textcolor{blue}{blue line}: $\lambda = 0.5$) against the vanilla random-shuffle baseline (\textcolor{purple}{purple line}). The evaluation considers three key indicators: \textit{actor gradient norm}, \textit{policy-gradient clipping fraction}, and \textit{validation accuracy}.

\paragraph{Gradient norm.}  
The gradient norm reflects the overall magnitude of parameter updates. Models trained with VAS consistently exhibit higher gradient norms compared to the shuffle baseline, indicating more substantial and informative updates. This empirical finding is consistent with our theoretical analysis in \S\ref{sec:theory}, which demonstrates that prompts with higher reward variance produce gradients characterized by greater magnitude and more reliable signal strength.

\paragraph{Policy-gradient clip fraction.}  
The clip fraction quantifies the \textit{frequency} with which the policy update magnitude reaches the clipping threshold in GRPO. Within stable ranges, a higher clip fraction indicates more effective learning: the model performs substantial yet constrained updates, exploits the trust region more fully, and extracts stronger signals from each batch.
As illustrated in Figure~\ref{fig:training_efficiency_vas}, VAS configurations achieve higher and more stable clip fractions compared to the baseline, highlighting their improved sample efficiency and more effective exploration of the policy space.  

\paragraph{Validation accuracy.}  
On the held-out validation set, VAS demonstrates consistent improvements in both convergence speed and final accuracy compared to the shuffle baseline.
Employing full VAS sampling (\(\lambda = 1.0\)) yields the most rapid and stable performance gain, while the mixed configuration (\(\lambda = 0.5\)) also surpasses uniform sampling in convergence efficiency. Although the difference between \(\lambda = 1.0\) and \(\lambda = 0.5\) is not pronounced on this mathematics-focused set, it is worth noting that incorporating partial random sampling can, in principle, encourage broader data coverage and reduce the risk of oversampling a limited subset of prompts. This trade-off is expected to be more beneficial in domains characterized by greater content heterogeneity or less structured reward signals.




\subsection{Dynamics of Variance Promotion Score}

Figure~\ref{fig:curr_weight_transfer} summarizes how Variance Promotion Scores (VPS) evolve across update intervals ($t\!\to\!t{+}14$), where the \textit{histograms} show the marginal VPS distribution and \textit{transition matrices} of VPS assignments between step $t$ (rows) and step $t{+}14$ (columns)\footnote{$T_{\text{update}}$ is set to 14.}.

\paragraph{Convergence of rankings.}
As training progresses, the mass in the transition matrices progressively concentrates along the diagonal, while the off–diagonal dispersion diminishes. This behavior reflects the progressive stabilization of per-prompt VPS rankings, converging toward a consistent ordering. Such convergence aligns with the intended steady state of VAS, in which high-variance ``frontier'' prompts persist only as a comparatively small yet stable subset.

\paragraph{Bidirectional movement (adaptation).}
Non-negligible amount of off–diagonal mass persists in both directions (low$\!\to$high and high$\!\to$low), reflecting the adaptive nature of VPS under the evolving policy. Certain prompts gain informativeness as their outcome become more balanced or their trajectory patterns diversity increase, whereas others gradually lose informativeness once they are either consistently solved with ease or repeatedly lead to failure.

\paragraph{High$\!\to$high persistence fades.}
In the early stages, a visible block appears in the bottom-right region (high$\!\to$high), but this gradually weakens as training progresses. This pattern indicates that many prompts are initially challenging, remaining near the maximum OVS/TDS levels across updates. As the policy improves, however, these prompts either become polarized in correctness—lowering OVS—or converge toward more uniform trajectories—lowering TDS. Consequently, they gradually exit the high-VPS subset.

\begin{figure}[t]
    \centering
    \includegraphics[width=\linewidth]{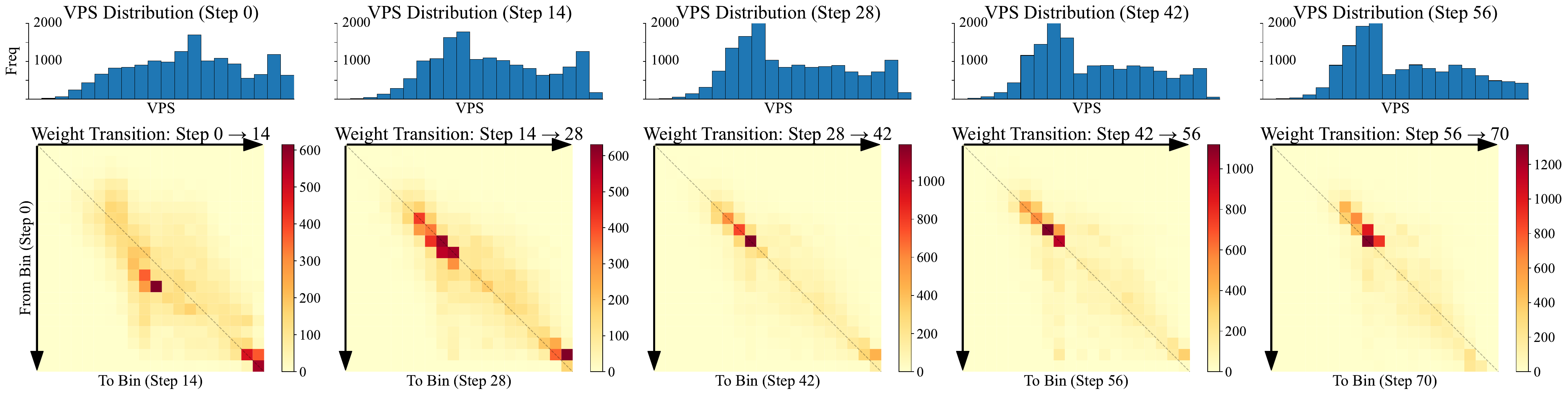}
    \caption{Dynamics of Variance Promotion Score (VPS) during training. 
The top row illustrates the distribution of VPS values across data points at different training steps. 
The bottom row shows transition matrices of VPS assignments between consecutive update intervals, where each cell indicates the number of data points moving from a source bin (vertical axis) at the earlier step to a target bin (horizontal axis) at the later step. 
The arrows indicate the direction from lower to higher VPS bins, facilitating interpretation of upward or downward transitions.}
    \label{fig:curr_weight_transfer}
\end{figure}

\paragraph{Distributional shift toward mid-VPS.}
The marginal distribution of VPS progressively evolves from a relatively flat shape with a pronounced upper tail to a more compact form centered around medium scores. Since $\mathrm{VPS}=\alpha\,\mathrm{OVS}+\beta\,\mathrm{TDS}$ with $\alpha(0.8)>\beta(0.2)$, this shift reflects two key dynamics: (1) as training proceeds, fewer prompts achieve near-maximum OVS (with pass rates $\approx0.5$), and (2) the contribution of TDS establishes a residual floor, anchoring many prompts at moderate VPS even when correctness becomes less stable. This emergent clustering around mid-VPS characterizes a signature of convergence: VAS increasingly narrows its focus to a frontier of prompts where reward variance remains informative and useful for further optimization.

\paragraph{Asymmetry of flows.}
In later intervals, transitions from higher to mid VPS bins occur more frequently than movements in the opposite direction, implying a gradual reduction in reward variance as competence improves. This observation aligns with the Variance–Progress principle (ref. \S\ref{sec:theory}): once learning progress has been achieved on a given prompt, both its reward variance and VPS tend to diminish, thereby encouraging the sampler to redistribute probability mass toward other frontier items that remain more uncertain or informative.

\subsection{Qualitative Demonstration}
\begin{figure}[t]
    \centering
    \includegraphics[width=\linewidth]{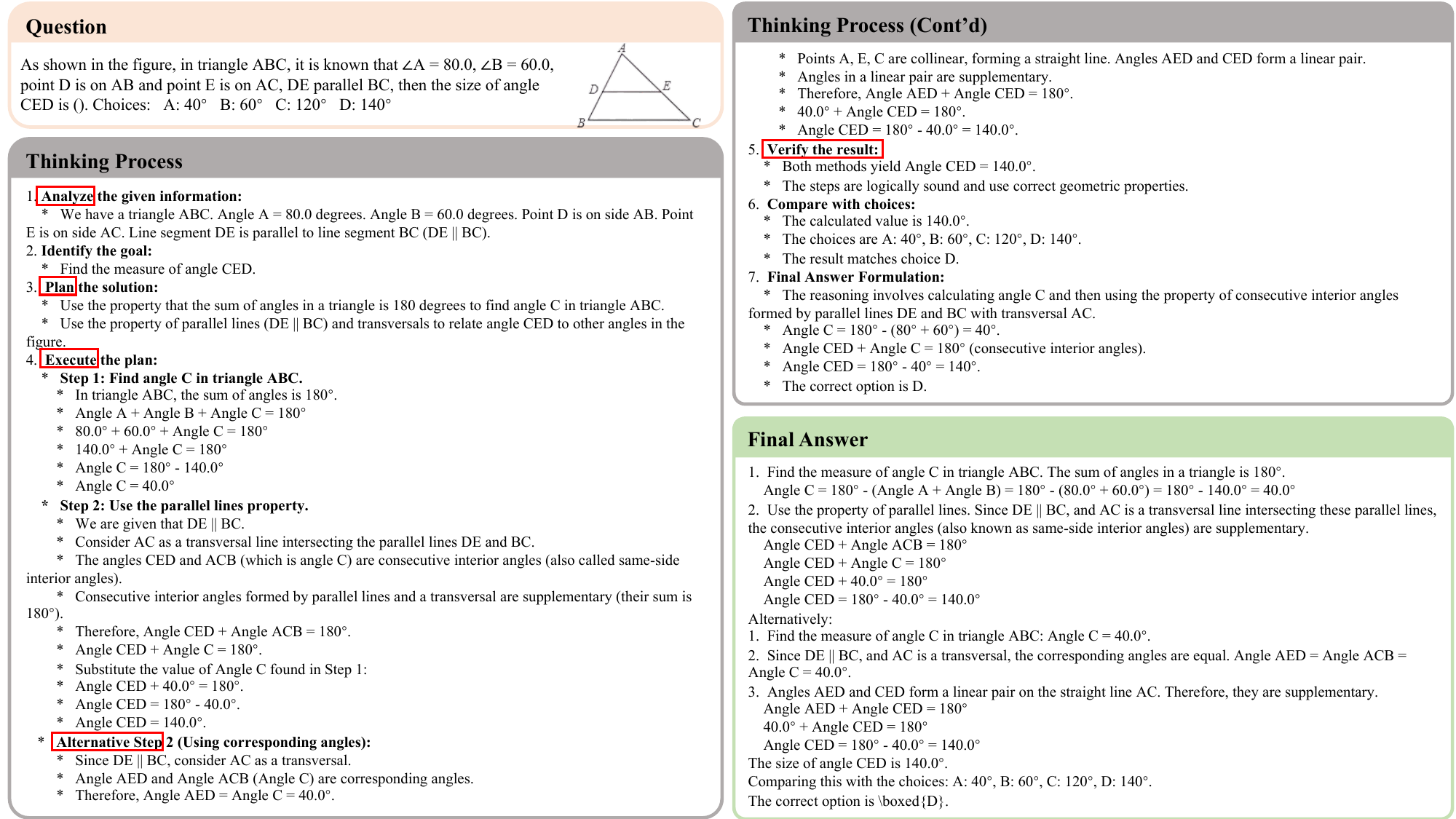}
    \caption{Qualitative demonstration of MMR1’s reasoning process on a MathVerse problem. The figure illustrates the input question, the model’s step-by-step thinking process, and the final answer. The reasoning is logically structured, including problem analysis, solution planning, execution, verification, and alternative approaches, ultimately arriving at the correct answer ($140^\circ$).}
    \label{fig:demo}
    \vspace{-2mm}
\end{figure}

As shown in Figure~\ref{fig:demo}, this MathVerse question highlights the reasoning capability of \ours. The solution it generates follows a clear and logical structure: the model begins by restating the given conditions, then applies the angle-sum property of a triangle to determine the missing angle, and finally uses parallel-line properties to compute the target angle. This step-by-step organization reflects a coherent “analyze–plan–execute” process.
The response also demonstrates reflective reasoning. After deriving the result, the model verifies consistency with geometric rules and cross-checks against the answer choices. Furthermore, it provides an alternative method based on corresponding and supplementary angles, which strengthens confidence in the correctness of the conclusion.

Overall, the model not only produces the correct answer but also exhibits robust reasoning behaviors, including systematic decomposition, verification, and multiple-solution perspectives, which illustrate strong problem-solving ability beyond direct computation.

\section{Conclusion and Limitation}
In this work, we investigate the challenge of gradient vanishing in reinforcement learning for multimodal reasoning. We introduce Variance-Aware Sampling (VAS), a sampling strategy that exploits outcome variance and trajectory diversity to prioritize informative prompts while maintaining broad data coverage. Grounded in theoretical analysis and supported by extensive empirical evaluation, VAS enhances training stability and improves the effectiveness of reinforcement learning in multimodal reasoning tasks. Beyond the methodological contribution, a central outcome of this work is the release of large-scale, carefully curated cold-start datasets and well-tuned models, which we hope will provide valuable resources for benchmarking and advancing future research in this area.

Despite these contributions, our work has several limitations. First, although VAS mitigates gradient vanishing, it does not fully resolve all training instabilities inherent to multimodal reinforcement learning. Second, 
the computation of variance-based prompt scores (VPS) incurs additional overhead, though this can be mitigated by increasing update intervals or selectively updating a subset of samples. 
Finally, 
our method primarily focuses on data sampling; while it is expected to complement algorithmic advances in reinforcement learning, a systematic investigation into their integration is left to future work.

Looking ahead, we believe this study opens several promising avenues. Future research may explore extending VAS to broader domains, examining its interaction with diverse reward designs, and integrating it with more advanced reinforcement learning algorithms to further improve sample efficiency and robustness. 
We hope that our methodological innovations, together with the released resources, will provide a solid foundation for the community to advance the development of more stable and capable multimodal reasoning models.

\newpage

\bibliography{iclr2026_conference}
\bibliographystyle{iclr2026_conference}

\clearpage
\appendix

\section{Variance--Progress Theory}\label{apdx:sec:proofs}

\begin{tcolorbox}[colback=gray!5,colframe=gray!35,boxrule=0.4pt,arc=1.2mm,left=4pt,right=4pt,top=3pt,bottom=3pt]
\footnotesize
\textbf{Intuition.} Gradient updates are informative only if sampled rollouts produce \emph{different} rewards. 
When rewards are nearly identical, advantages collapse and gradients vanish. 
Our theorem shows that (under standard smoothness and non-degeneracy conditions) the expected improvement after a single update is \emph{linearly} lower-bounded by the reward variance of the prompt. 
Thus, selecting prompts that induce mixed outcomes (OVS) and diverse reasoning paths (TDS) provably strengthens learning.
\end{tcolorbox}

\vspace{0.25em}
\noindent\textit{At a glance (notation \& assumptions).} 
We write $g(x,y)=\nabla_\theta\log\pi_\theta(y\mid x)$, $\bar R(x)=\mathbb E_y[R(x,y)]$, $R_{\mathrm{res}}=R-\bar R$, and $\Gamma_\theta(x)=\mathbb E_y[g\,g^\top]$. 
Assumptions: (i) $L$-smoothness of $J_x$; (ii) bounded score function $\|g\|\!\le\!G_{\max}$; (iii) uniform positive definiteness of $\Gamma_\theta(x)\succeq \lambda_{\min}I$; (iv) gradient lower bound $\|\nabla J_x\|^2\!\ge\!c_{\min}\operatorname{Var}[R]$ (a mild consequence of (iii)).

\subsection{Technical Preliminaries}\label{apdx:ssec:lemma-var}
Throughout we fix a prompt $x\in\mathcal X$ and write expectations over $y$ as 
$\mathbb E_{y}[\,\cdot\,]=\mathbb E_{y\sim\pi_\theta(\cdot\mid x)}[\,\cdot\,]$.
The policy is differentiable and strictly positive on~$\mathcal Y$.  
We denote $\bar R(x)=\mathbb E_y[R(x,y)]$ and $R_{\mathrm{res}}(x,y)=R(x,y)-\bar R(x)$.

\subsubsection{Action-independent baselines}
\begin{proposition}[Optimal action-independent baseline]
\label{prop:optimal-baseline}
For any square-integrable reward $R$, the baseline $b^{\star}(x)=\bar R(x)$ minimizes the total variance of the REINFORCE gradient estimator over all baselines $b(x)$ that depend on $x$ but not on $y$.  
\end{proposition}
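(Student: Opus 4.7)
The plan is to reduce the variance minimization to a scalar convex quadratic in $b(x)$. First I would write the estimator as $G(x;b)=g(x,y)\bigl(R(x,y)-b(x)\bigr)$ and, using $\mathbb E_y[g(x,y)]=0$ together with the action-independence of $b$, confirm that
\[
\mathbb E_y\bigl[G(x;b)\bigr] \;=\; \mathbb E_y[g R] - b(x)\,\mathbb E_y[g] \;=\; \nabla_\theta J_x(\theta),
\]
independently of $b$. Consequently the mean contribution $\|\mathbb E[G]\|^2$ is constant in $b$ and drops out of the variance, so minimizing $\operatorname{tr}\bigl(\operatorname{Cov}_y[G(x;b)]\bigr)$ is equivalent to minimizing the second moment
\[
Q(b) \;:=\; \mathbb E_y\!\left[\|g(x,y)\|^2\,(R(x,y)-b)^2\right].
\]

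Second, I would expand $Q(b)$ as a univariate convex quadratic,
\[
Q(b) \;=\; \mathbb E[\|g\|^2 R^2] \;-\; 2b\,\mathbb E[\|g\|^2 R] \;+\; b^2\,\mathbb E[\|g\|^2],
\]
and solve $Q'(b)=0$. Strict convexity follows from $Q''(b) \;=\; 2\,\mathbb E_y[\|g\|^2] \;=\; 2\,\operatorname{tr}\bigl(\Gamma_\theta(x)\bigr) \;>\; 0$, where positivity is the Fisher-information non-degeneracy stipulated in the preliminaries. This yields a unique critical point, the $\|g\|^2$-weighted mean $b^\dagger(x) \;=\; \mathbb E_y[\|g\|^2 R]\,/\,\mathbb E_y[\|g\|^2]$, which is therefore the global minimum over action-independent baselines.

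Finally, I would reconcile $b^\dagger$ with the stated expression $b^\star(x)=\bar R(x)$ by invoking the same score--reward decoupling that underlies the factorization $\operatorname{Var}[G(x)] = \operatorname{Var}_y[R(x,y)]\,\Gamma_\theta(x)$ of Lemma~A.1: treating $\|g(x,y)\|^2$ and $R(x,y)$ as uncorrelated under $\pi_\theta(\cdot\mid x)$ gives $\mathbb E_y[\|g\|^2 R] = \mathbb E_y[\|g\|^2]\,\bar R(x)$ and hence $b^\dagger(x)=\bar R(x)$. The main obstacle is precisely this last step, since in full generality the variance-optimal action-independent baseline is the $\|g\|^2$-weighted mean rather than the plain mean; the proof must therefore either invoke a uniformity/independence hypothesis on the score function or inherit it explicitly from the framework that produces Lemma~A.1. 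An alternative, cleaner route would be to redefine ``total variance'' as the reward-level quantity $\operatorname{Var}_y\!\bigl[R(x,y)-b(x)\bigr]$, in which case $b^\star=\bar R(x)$ follows immediately from the standard $L^2$-projection argument and no auxiliary hypothesis is required.
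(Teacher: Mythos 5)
Your proposal follows the same route as the paper's proof: both use $\mathbb E_y[g]=0$ to show the estimator's mean is independent of $b$, and both reduce the problem to minimizing the scalar convex quadratic $Q(b)=\mathbb E_y\!\left[\|g\|^2(R-b)^2\right]$. The difference is that you carry the minimization through honestly, and in doing so you expose a real gap in the paper's argument rather than in your own. Setting $Q'(b)=0$ gives the score-weighted mean $b^\dagger(x)=\mathbb E_y[\|g\|^2R]/\mathbb E_y[\|g\|^2]$, not $\bar R(x)$; the paper's proof simply asserts that ``differentiating and setting to zero yields $b(x)=\bar R(x)$,'' which is false unless $\|g\|^2$ and $R$ are uncorrelated under $\pi_\theta(\cdot\mid x)$. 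The paper effectively concedes this in the remark immediately following its proof, where the weighted mean $b^{\|g\|^2}$ is said to ``further reduce scalar variance''; note that $b^{\|g\|^2}$ depends on $x$ alone (it is mislabeled there as action-dependent), so it is a legitimate competitor within the very class of baselines over which the proposition claims $\bar R$ is optimal, contradicting the stated optimality.

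Your diagnosis of the obstacle is therefore exactly right, and your two proposed repairs are the correct ones: either import a decoupling hypothesis making $\|g\|^2$ and $R$ uncorrelated (which is also what is silently needed for the clean factorization $\operatorname{Var}[G(x)]=\operatorname{Var}_y[R]\,\Gamma_\theta(x)$ quoted in the main text), or weaken ``total variance'' to the reward-level quantity $\operatorname{Var}_y[R-b]$, for which $b^\star=\bar R(x)$ is the standard $L^2$-projection. As written, the proposition's proof does not establish the claim in the generality stated; your version, with the caveat made explicit, is the more careful one.
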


\begin{proof}
Let $G_b(x)=\mathbb E_y[g(x,y)(R(x,y)-b(x))]$, where $g(x,y)=\nabla_\theta\log\pi_\theta(y\mid x)$.  
Using $\mathbb E_y[g]=0$, the covariance term vanishes and
$\operatorname{Var}[G_b(x)]=\mathbb E_y[\|g\|^{2}(R-b)^{2}]$.
The right-hand side is a convex quadratic in $b$. Differentiating and setting to zero yields $b(x)=\bar R(x)$ \citep{williams1992reinforce}.
\end{proof}

\paragraph{Remark.}
An action-dependent baseline such as 
$b^{\|g\|^2}(x)={\mathbb E_y[\|g\|^{2}R]}/{\mathbb E_y[\|g\|^{2}]}$  
can further reduce scalar variance~\citep{Greensmith2001VarianceRT}, but requires inner Monte-Carlo estimates. The bounds below hold for any $y$-independent baseline.

\subsubsection{Gradient-variance bounds}
With $b^\star(x)=\bar R(x)$,
\[
    G(x)=\mathbb E_y[g(x,y)R_{\mathrm{res}}(x,y)], 
    \qquad
    \mathbb E[G(x)]=\nabla_\theta J_x(\theta).
\]
Its covariance is
\[
    \operatorname{Var}[G(x)]
      =\mathbb E_y[R_{\mathrm{res}}^{2}\,g\,g^{\!\top}]
       -\nabla_\theta J_x(\theta)\nabla_\theta J_x(\theta)^{\!\top}.
\]
For bounding we drop the nonnegative outer product, which only reduces the variance, yielding a valid but looser inequality.

\begin{assumption}[Bounded score-function gradient]\label{as:bounded-grad}
There exists $G_{\max}<\infty$ such that $\|g(x,y)\|\le G_{\max}$ for all $(x,y)$.  
\end{assumption}

\begin{lemma}[Variance sandwich bound]\label{lem:var-bounds}
Under Assumption~\ref{as:bounded-grad}, let $\Gamma_\theta(x)=\mathbb E_y[g(x,y)g(x,y)^{\!\top}]$. If $\lambda_{\min}>0$ is the smallest eigenvalue of $\Gamma_\theta(x)$ uniformly over~$x$, then
\[
    \lambda_{\min}\,\operatorname{Var}_y[R(x,y)]\,I_d
    \;\preceq\;
    \operatorname{Var}[G(x)]
    \;\preceq\;
    G_{\max}^{2}\,\operatorname{Var}_y[R(x,y)]\,I_d.
\]
Thus reward variance is the only prompt-dependent factor; Fisher terms contribute bounded, model-dependent constants.
\end{lemma}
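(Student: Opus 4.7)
The plan is to work directly from the covariance identity
$\operatorname{Var}[G(x)] = \mathbb{E}_y[R_{\mathrm{res}}^2\, g g^\top] - \nabla_\theta J_x(\theta)\, \nabla_\theta J_x(\theta)^\top$
stated immediately before the lemma, and control each side separately in PSD order. Both bounds are quadratic-form statements, so it suffices to show that for every unit vector $v \in \mathbb{R}^d$, the scalar $v^\top \operatorname{Var}[G(x)] v$ lies between $\lambda_{\min}\operatorname{Var}_y[R]$ and $G_{\max}^2 \operatorname{Var}_y[R]$, and then take the supremum over $v$.

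For the upper bound, I would first discard the nonnegative outer product $\nabla J_x \nabla J_x^\top$, which only enlarges the matrix in PSD order. For any unit $v$, Assumption~\ref{as:bounded-grad} then gives
$v^\top \mathbb{E}_y[R_{\mathrm{res}}^2 g g^\top] v = \mathbb{E}_y[R_{\mathrm{res}}^2 (v^\top g)^2] \le \mathbb{E}_y[R_{\mathrm{res}}^2 \|g\|^2] \le G_{\max}^2\, \operatorname{Var}_y[R]$,
which yields $\operatorname{Var}[G(x)] \preceq G_{\max}^2 \operatorname{Var}_y[R]\, I_d$. This is the easy half and requires only Assumption~\ref{as:bounded-grad} plus $\mathbb{E}_y[R_{\mathrm{res}}^2]=\operatorname{Var}_y[R]$.

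For the lower bound, my plan is to combine the uniform positive-definiteness $\Gamma_\theta(x) \succeq \lambda_{\min} I$ with a Cauchy--Schwarz control on the subtracted outer product. Writing $\nabla J_x = \mathbb{E}_y[R_{\mathrm{res}} g]$, the $L^2(\pi_\theta)$ Cauchy--Schwarz inequality gives $(v^\top \nabla J_x)^2 \le \operatorname{Var}_y[R]\, v^\top \Gamma_\theta(x) v$, hence $\nabla J_x \nabla J_x^\top \preceq \operatorname{Var}_y[R]\, \Gamma_\theta(x)$ as matrices. In parallel I would decompose the weighted Fisher as
$\mathbb{E}_y[R_{\mathrm{res}}^2 g g^\top] = \operatorname{Var}_y[R]\, \Gamma_\theta(x) + \operatorname{Cov}_y\bigl(R_{\mathrm{res}}^2,\, g g^\top\bigr)$,
so that subtracting the Cauchy--Schwarz bound on $\nabla J_x \nabla J_x^\top$ and invoking $\Gamma_\theta(x) \succeq \lambda_{\min} I$ reduces the target to showing that the matrix-valued covariance correction $\operatorname{Cov}_y(R_{\mathrm{res}}^2, g g^\top)$ does not erode the $\lambda_{\min}\operatorname{Var}_y[R]\,I$ term.

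I expect this final step to be the main obstacle. The residual $R_{\mathrm{res}}$ can concentrate on values of $y$ where the score $g(x,y)$ happens to be small, so the weighted Fisher need not PSD-dominate $\operatorname{Var}_y[R]\,\Gamma_\theta(x)$ without further structure. The clean two-sided bound therefore implicitly relies on either a mild regularity condition decoupling the magnitude of $R_{\mathrm{res}}$ from the dominant score directions, or on absorbing this coupling into the ``model-dependent constants'' flagged right after the statement (effectively replacing $\lambda_{\min}$ by a reward-adapted $\tilde\lambda_{\min}$). Once this coupling step is settled, the rest of the argument is routine: combine the PSD inequalities established above, take the supremum over unit $v$, and read off the sandwich bound.
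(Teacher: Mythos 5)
Your upper bound is exactly the paper's: drop the subtracted outer product $\nabla_\theta J_x\,\nabla_\theta J_x^{\top}$ (which can only increase the matrix in the PSD order) and bound $(v^{\top}g)^2\le\|g\|^2\le G_{\max}^2$ inside the expectation. That half is correct and complete.

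On the lower bound, the obstacle you flag is real, and you should know that the paper's own proof does not resolve it --- it steps over it. The paper writes $v^{\top}\operatorname{Var}[G(x)]v=\mathbb E_y[(v^{\top}g)^2R_{\mathrm{res}}^2]$, which silently identifies $\operatorname{Var}[G(x)]$ with $\mathbb E_y[R_{\mathrm{res}}^2\,gg^{\top}]$; but the announced ``drop the outer product'' step only yields $\operatorname{Var}[G(x)]\preceq\mathbb E_y[R_{\mathrm{res}}^2\,gg^{\top}]$, an inequality in the wrong direction for a lower bound, so the subtracted term is never accounted for. It then asserts $\mathbb E_y[(v^{\top}g)^2R_{\mathrm{res}}^2]\ge\lambda_{\min}\,\mathbb E_y[R_{\mathrm{res}}^2]$ directly from $\Gamma_\theta(x)\succeq\lambda_{\min}I_d$, which does not follow: $\mathbb E_y[(v^{\top}g)^2]\ge\lambda_{\min}$ controls the unweighted second moment, and, as you observe, $R_{\mathrm{res}}^2$ can concentrate on rollouts where $(v^{\top}g)^2$ is small. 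Your Cauchy--Schwarz control $\nabla_\theta J_x\,\nabla_\theta J_x^{\top}\preceq\operatorname{Var}_y[R]\,\Gamma_\theta(x)$ is a genuine improvement over the paper's treatment of the subtracted term, but it still leaves the correction $\operatorname{Cov}_y\bigl(R_{\mathrm{res}}^2,(v^{\top}g)^2\bigr)$ unbounded from below, so neither your argument nor the paper's establishes the stated lower bound as written. Closing it requires an extra hypothesis --- e.g.\ (approximate) decoupling of $R_{\mathrm{res}}^2$ from the score direction under $\pi_\theta$, or a nonnegativity condition on that covariance --- or else the constant $\lambda_{\min}$ must be replaced by a reward-adapted one, exactly as you anticipate. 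Your diagnosis of where the lemma is fragile is correct; the gap is in the statement's supporting argument as much as in your proposal.
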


\begin{proof}
For any unit vector $v$,
$v^{\!\top}\operatorname{Var}[G(x)]v
=\mathbb E_y[(v^{\!\top}g)^{2}R_{\mathrm{res}}^{2}]$.
Upper bound: $(v^{\!\top}g)^2 \le \|g\|^2 \le G_{\max}^2$.  
Lower bound: $\Gamma_\theta(x)\succeq\lambda_{\min}I$ implies  
$v^{\!\top}\operatorname{Var}[G(x)]v \ge \lambda_{\min}\mathbb E_y[R_{\mathrm{res}}^2] = \lambda_{\min}\operatorname{Var}_y[R]$.
\end{proof}

\subsubsection{Bounds on the Fisher term}
\begin{proposition}[Uniform Fisher bounds]\label{prop:fisher-bounds}
Under Assumption~\ref{as:bounded-grad}, there exist constants $0<\lambda_{\min}\le\lambda_{\max}=G_{\max}^2$ such that
\[
    \lambda_{\min}\,I_d
    \;\preceq\;
    \Gamma_\theta(x)
    \;\preceq\;
    \lambda_{\max}\,I_d,
    \qquad \forall x\in\mathcal X.
\]
\end{proposition}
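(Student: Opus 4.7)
The plan is to treat the two bounds separately, since they have qualitatively different character: the upper bound is a direct computation from Assumption~\ref{as:bounded-grad}, while the lower bound genuinely rests on a non-degeneracy hypothesis on the policy family.

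For the upper bound, I would observe that for any unit vector $v\in\mathbb R^d$,
\begin{equation*}
    v^\top \Gamma_\theta(x)\,v \;=\; \mathbb E_y\!\bigl[(v^\top g(x,y))^2\bigr] \;\le\; \mathbb E_y\|g(x,y)\|^2 \;\le\; G_{\max}^2,
\end{equation*}
where the first inequality is Cauchy--Schwarz ($|v^\top g|\le\|v\|\|g\|=\|g\|$) and the second is Assumption~\ref{as:bounded-grad}. Taking the supremum over unit $v$ gives $\Gamma_\theta(x)\preceq G_{\max}^2 I_d$ for every $x\in\mathcal X$, so the identification $\lambda_{\max}=G_{\max}^2$ follows immediately and uniformly in $x$.

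For the lower bound, I would invoke a mild uniform non-degeneracy (identifiability) condition on the parametric family, consistent with the remark made after Assumption~\ref{as:smooth}. Concretely, assume the parameter domain $\Theta$ is compact, that $\theta\mapsto \pi_\theta(\cdot\mid x)$ is continuously differentiable, and that for each $(x,\theta)$ no nonzero direction $v$ satisfies $v^\top g(x,y)=0$ almost surely under $\pi_\theta(\cdot\mid x)$. The last condition implies $\Gamma_\theta(x)\succ 0$ pointwise, and the map $(x,\theta)\mapsto \lambda_{\min}(\Gamma_\theta(x))$ is continuous (eigenvalues of a symmetric matrix depend continuously on its entries). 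If $\mathcal X$ is finite or compact, the infimum is attained and strictly positive, yielding a uniform $\lambda_{\min}>0$; if $\mathcal X$ is merely parameter-independent and the sup over $x$ of the Fisher matrix is uniformly regular, a tightness argument suffices. Either way, one obtains the claimed uniform bound.

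The main obstacle is exactly this lower bound: Assumption~\ref{as:bounded-grad} alone gives no control over the smallest eigenvalue of $\Gamma_\theta(x)$, since a highly overparameterized or redundantly parameterized policy could have a rank-deficient Fisher matrix even with a well-bounded score function. The honest reading of the proposition is therefore that $\lambda_{\min}$ is a model-dependent quantity whose positivity must be imposed (or verified per architecture), rather than derived from Assumption~\ref{as:bounded-grad}. In proofs throughout Appendix~\ref{apdx:sec:proofs} this is the content we are using whenever we lower-bound $v^\top\Gamma_\theta(x)v$ by $\lambda_{\min}\|v\|^2$; if desired, one could instead restrict the analysis to the effective subspace on which $\Gamma_\theta(x)$ acts non-trivially and recover the same conclusions with $\lambda_{\min}$ redefined as the smallest \emph{nonzero} eigenvalue.
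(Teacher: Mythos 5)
Your proof takes essentially the same route as the paper's: the upper bound is the direct computation $v^\top\Gamma_\theta(x)v=\mathbb E_y[(v^\top g)^2]\le G_{\max}^2$ from Assumption~\ref{as:bounded-grad}, and the lower bound is obtained by importing a non-degeneracy condition (the paper invokes a full-dimensional exponential family with $\theta$ in a compact set; you invoke pointwise positive definiteness plus continuity and compactness). Your explicit caveat that $\lambda_{\min}>0$ cannot be derived from Assumption~\ref{as:bounded-grad} alone is correct and is implicitly acknowledged by the paper's own proof, which also appeals to hypotheses not stated in the proposition.
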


\begin{proof}
The upper bound follows from $\|g\|\le G_{\max}$. The lower bound holds under the standard non-degeneracy assumption that $\pi_\theta(\cdot\mid x)$ defines a full-dimensional exponential family and $\theta$ ranges over a compact set, ensuring $\Gamma_\theta(x)\succ0$ uniformly \citep{amari2000methods}.
\end{proof}

\subsection{Proof of the Variance--Progress Theorem}\label{apdx:ssec:theorem-proof}

Let the update be $\theta^{+}=\theta+\eta G(x)$ with $\eta>0$.  
A second-order Taylor expansion yields
\[
J_x(\theta^{+})-J_x(\theta)
   = \eta\langle\nabla J_x, G(x)\rangle
   + \tfrac12 \eta^2 G(x)^{\!\top}H_x(\tilde\theta)G(x),
\]
for some $\tilde\theta$ on the segment $[\theta,\theta^{+}]$. Taking expectations and $L$-smoothness,
\[
\mathbb E[J_x(\theta^{+})-J_x(\theta)]
   \ge \eta\|\nabla J_x\|^2
   - \tfrac12 \eta^2 L \,\mathbb E[\|G(x)\|^2].
\]
Decomposing $\mathbb E[\|G(x)\|^2]$ into bias and variance and using Lemma~\ref{lem:var-bounds},
\[
\mathbb E[J_x(\theta^{+})-J_x(\theta)]
   \ge \eta\|\nabla J_x\|^2
   - \tfrac12 \eta^2 L \bigl(\|\nabla J_x\|^2 + dG_{\max}^2\operatorname{Var}[R]\bigr).
\]
Assuming $\|\nabla J_x\|^2 \ge c_{\min}\operatorname{Var}[R]$,
\[
\mathbb E[J_x(\theta^{+})-J_x(\theta)]
   \ge \eta c_{\min}\operatorname{Var}[R]
     - \tfrac12 \eta^2 L(c_{\min}+dG_{\max}^2)\operatorname{Var}[R].
\]
Choosing $\eta \le \tfrac{c_{\min}}{2L(c_{\min}+dG_{\max}^2)}$ gives
$\mathbb E[J_x(\theta^{+})-J_x(\theta)] \ge \tfrac{\eta c_{\min}}{2}\operatorname{Var}[R]$.
Restricting further to $\eta \le c_{\min}/(4L)$ yields the simplified bound used in the main text:
\[
\mathbb E[J_x(\theta^{+})-J_x(\theta)]
   \ge \tfrac{\eta c_{\min}}{4}\operatorname{Var}_{y\sim\pi_\theta}[R(x,y)].
\]

\paragraph{Discussion.} The bound depends only on constants $c_{\min}$ and $L$ tied to the model family. All prompt dependence enters through $\operatorname{Var}[R]$, establishing reward variance as the decisive quantity.

\subsection{Two-Level Decomposition of Reward Variance}\label{apdx:ssec:two-level-decomp}

Let $y=(y_{\mathrm{cot}},y_{\mathrm{ans}})$ and define $R(x,y)=\mathbf 1_{\text{verifier}(y_{\mathrm{ans}})}$.  
Let $Z=\varphi(y_{\mathrm{cot}})$ represent the chain. By the law of total variance,
\[
\operatorname{Var}_y[R]
   = \mathbb E_Z[p_Z(1-p_Z)] + \operatorname{Var}_Z[p_Z],
\]
where $p_Z(x)=\Pr(R=1\mid Z)$.

\paragraph{Intra/inter-trajectory terms.}  
The first term is intra-trajectory Bernoulli variance; the second is inter-trajectory variation of success probabilities.  

\paragraph{Efron--Stein lower bound.}  
If $|p_z - p_{z'}|\le L\,d(y_{\mathrm{cot}}(z),y_{\mathrm{cot}}(z'))$ for a bounded distance $d\in[0,1]$, then
\[
\operatorname{Var}_Z[p_Z] \ge \tfrac{L^2}{4}\, \mathbb E_{Z,Z'}[d^2(y_{\mathrm{cot}}(Z),y_{\mathrm{cot}}(Z'))].
\]

\paragraph{OVS and TDS estimators.}  
With $K$ rollouts, $\hat p=\tfrac1K\sum R$,  
\[
\widehat{\text{OVS}}(x)=\hat p(1-\hat p), \qquad
\text{TDS}(x)=\tfrac{1}{K(K-1)}\sum_{i\neq j} d^2(y^{(i)}_{\mathrm{cot}},y^{(j)}_{\mathrm{cot}}).
\]
By the strong law and U-statistic convergence, both estimators are strongly consistent for their respective population terms.  

\paragraph{Variance Promotion Score (VPS).}  
Define $\widehat{\text{VPS}}=\alpha\widehat{\text{OVS}}+\beta\text{TDS}$ with $\alpha,\beta>0$.  
Then $\widehat{\text{VPS}}$ converges almost surely to a positive affine transform of a \emph{lower bound} on $\operatorname{Var}[R]$. Hence, VPS is a strongly consistent monotone surrogate for reward variance (it does not require equality to hold).

\subsection{Extension from REINFORCE to GRPO}\label{apdx:ssec:grpo}
GRPO replaces the scalar baseline by the in-batch mean reward and whitens with the sample standard deviation:
\[
\tilde R(x,y_i) = \frac{R(x,y_i)-\bar r(x)}{s(x)+\delta}.
\]
This yields a centered, variance-controlled REINFORCE estimator. Multiplying by importance ratios (and optionally clipping) preserves the core dependence on reward variance: the Variance--Progress lower bound continues to hold after a rescaling of constants. When clipping is enabled, an $O(\varepsilon)$ bias arises; the bound is reduced by the same order but remains strictly positive whenever $\operatorname{Var}[R]>0$. Thus prompts that induce higher reward variance guarantee larger expected improvements under GRPO, paralleling vanilla REINFORCE.

\section{Fine-Grained Math Type Definitions}
\label{apdx:ssec:math_types}

For the construction of our math dataset, each problem is assigned to one of thirteen fine-grained categories.
These categories provide balanced coverage across fundamental and advanced domains of mathematics, and ensure that the RL dataset spans diverse reasoning skills.
The formal definitions, explanations, and illustrative examples for each category are presented below.

\begin{enumerate}
    \item \textbf{Arithmetic} \\
    \textit{Definition:} Arithmetic covers basic numerical operations, including addition, subtraction, multiplication, and division. \\
    \textit{Explanation:} It forms the foundation of mathematics by establishing rules for manipulating numbers. \\
    \textit{Example:} Compute $15 \times 12$.

    \item \textbf{Counting} \\
    \textit{Definition:} Counting addresses enumeration of objects or elements within a collection. \\
    \textit{Explanation:} It includes both simple enumeration and principles such as permutations and combinations. \\
    \textit{Example:} Determine how many integers between 1 and 100 are multiples of 5.

    \item \textbf{Combinatorics} \\
    \textit{Definition:} Combinatorics studies arrangements, selections, and combinations of discrete objects. \\
    \textit{Explanation:} It extends counting principles to complex scenarios involving structured sets. \\
    \textit{Example:} How many ways can 5 distinct books be arranged in a row?

    \item \textbf{Algebra} \\
    \textit{Definition:} Algebra represents relationships using symbols and equations. \\
    \textit{Explanation:} It provides systematic tools for solving equations and reasoning about unknowns. \\
    \textit{Example:} Solve $2x + 3 = 9$ for $x$.

    \item \textbf{Functions} \\
    \textit{Definition:} A function maps each input to exactly one output. \\
    \textit{Explanation:} Functions formalize dependencies between quantities, described via formulas, graphs, or rules. \\
    \textit{Example:} Given $f(x) = x^2 + 3$, find $f(2)$.

    \item \textbf{Plane Geometry} \\
    \textit{Definition:} Plane Geometry studies figures such as lines, angles, and polygons in two dimensions. \\
    \textit{Explanation:} It addresses lengths, angles, and areas of flat figures. \\
    \textit{Example:} Find the area of a triangle with base $10$ and height $5$.

    \item \textbf{Solid Geometry} \\
    \textit{Definition:} Solid Geometry extends geometric reasoning to three-dimensional figures. \\
    \textit{Explanation:} It concerns volumes, surface areas, and properties of 3D objects. \\
    \textit{Example:} Find the volume of a cube with side length $4$.

    \item \textbf{Combinatorial Geometry} \\
    \textit{Definition:} Combinatorial Geometry analyzes discrete configurations of geometric objects. \\
    \textit{Explanation:} It merges counting with geometry, such as enumerating diagonals or intersections. \\
    \textit{Example:} How many diagonals does a convex octagon have?

    \item \textbf{Descriptive Geometry} \\
    \textit{Definition:} Descriptive Geometry represents 3D objects using 2D projections. \\
    \textit{Explanation:} It enables precise measurement of spatial relationships via orthographic or perspective drawings. \\
    \textit{Example:} Sketch the top and front views of a cube resting on a horizontal plane.

    \item \textbf{Graph Theory} \\
    \textit{Definition:} Graph Theory studies structures composed of vertices and edges. \\
    \textit{Explanation:} It focuses on connectivity, paths, and cycles in discrete networks. \\
    \textit{Example:} Given a graph with vertices A, B, C, D and edges AB, BC, CD, and DA, does the graph contain a cycle?

    \item \textbf{Logic} \\
    \textit{Definition:} Logic studies formal reasoning and inference. \\
    \textit{Explanation:} It examines propositions, truth values, and the validity of conclusions. \\
    \textit{Example:} If "All cats are mammals" and "Fluffy is a cat," deduce whether "Fluffy is a mammal."

    \item \textbf{Statistics} \\
    \textit{Definition:} Statistics concerns the collection, analysis, and interpretation of data. \\
    \textit{Explanation:} It uses measures such as mean, median, and variance to summarize data. \\
    \textit{Example:} For $\{2, 4, 4, 6, 8\}$, compute the mean and median.

    \item \textbf{Topology} \\
    \textit{Definition:} Topology studies properties of spaces invariant under continuous deformations. \\
    \textit{Explanation:} It focuses on qualitative properties such as connectivity and the number of holes. \\
    \textit{Example:} Explain why a doughnut (torus) cannot be deformed into a sphere without removing the hole.
\end{enumerate}

These categories are used to uniformly sample math problems in the RL stage, ensuring both difficulty balance and coverage across diverse mathematical skills.

\section{Detailed Hyper-parameters}
\label{app:hyper}
The main hyperparameters used in the cold-start supervised fine-tuning stage are summarized in Table~\ref{tab:cold_hyp}.

\begin{table}[ht]
\centering
\caption{Cold-start stage hyperparameters.}
\label{tab:cold_hyp}
\begin{tabular}{ll}
\toprule
\textbf{Setting} & \textbf{Value} \\
\midrule
Training epochs & 3 \\
Gradient accumulation steps & 4 \\
Effective batch size & 32  \\
Sequence length cutoff & 16{,}384 \\
Optimizer & AdamW \\
Learning rate & $1\times10^{-5}$ \\
Learning rate schedule & Cosine decay \\
Warm-up ratio & 0.1 \\
Weight decay & 0 \\
Max gradient norm & 1.0 \\
Precision & bfloat16 \\
Deepspeed config & ZeRO-3 (32 shards) \\
\bottomrule
\end{tabular}
\end{table}

The main hyperparameters used in the RL stage (GRPO with VAS) are summarized in Table~\ref{tab:rl_hyp}.

\begin{table}[ht]
\centering
\caption{RL stage hyperparameters.}
\label{tab:rl_hyp}
\begin{tabular}{ll}
\toprule
\textbf{Setting} & \textbf{Value} \\
\midrule
Initialization (policy) & Cold-start checkpoint (Qwen2.5-VL) \\
Objective & GRPO (adv\_estimator=\texttt{grpo}) \\
Reward & $1*\text{format reward} + 1*\text{accuracy reward}$ \\
KL regularization & Enabled, penalty=\texttt{low\_var\_kl}, coef $=0.01$ (fixed) \\
Precision & bfloat16 \\
System prompt & As in Appx.~\ref{apdx:sys:prompt} \\
\midrule
\multicolumn{2}{l}{\textit{Data \& VAS sampling}} \\
Max prompt / response length & 2048 / 4096 \\
Image resolution limits & $[7{,}056,\;1{,}048{,}576]$ pixels \\
Sampling strategy & VAS (\texttt{curriculum}) \\
VAS metrics / weights & learnability (OVS) 0.8,\; self\_bleu\_123 (TDS) 0.2 \\
VAS update frequency & 56 steps \\
VAS mixture ratio & 0.5 (weighted vs. uniform) \\
VPS rollout for scoring & $n=16$, batch size $=4096$ \\
\midrule
\multicolumn{2}{l}{\textit{Rollout \& inference}} \\
Sampler & vLLM \\
Samples per prompt & $n=8$ \\
Temperature / top-$p$ / top-$k$ & 0.6 / 1.0 / $-1$ \\
Validation override & temp $=0.5$, $n=1$ \\
GPU memory util (vLLM) & 0.75 \\
\midrule
\multicolumn{2}{l}{\textit{Optimization (actor / critic)}} \\
Global batch size (actor / critic) & 512 / 256 \\
Micro-batch (update) (actor / critic) & 8 / 4 \\
Micro-batch (experience) (actor / critic) & 32 / 16 \\
PPO epochs & 20 \\
Clipping & policy clip\_low=0.2, clip\_high=0.2; value cliprange=0.5 \\
Max grad norm & 1.0 (both) \\
Optimizer & AdamW (lr $=1\times10^{-6}$, betas $=(0.9,0.999)$) \\
Weight decay & 0.01 \\
LR warmup ratio & 0.0 (constant warmup style) \\
\midrule
\multicolumn{2}{l}{\textit{Parallelism \& hardware}} \\
FSDP & Full shard (policy \& critic), \texttt{fsdp\_size}=8 \\
Episodes  & total\_episodes $=10$ \\
\bottomrule
\end{tabular}
\end{table}

\section{System Prompt}\label{apdx:sys:prompt}
\textbf{Training Prompt.}  
The following system prompt is used during RL training to enforce a structured reasoning format and to require the final answer to be enclosed in ``\texttt{\textbackslash boxed\{\}}''.  

\begin{table}[ht]
\centering
\caption{System prompt used in RL training.}
\label{tab:rl_prompt}
\begin{tcolorbox}[colframe=gray!60, colback=gray!10, fonttitle=\bfseries, coltitle=black, boxrule=0.6mm, arc=2mm, width=\textwidth, toptitle=3pt, bottomtitle=3pt]
\footnotesize
A conversation between User and Assistant. The User provides an image and asks a question. The Assistant first analyzes both the image and the question, then carefully thinks about the reasoning process step by step, and finally provides the User with an accurate answer. The Assistant must carefully checkout the correctness and validity of each reasoning step. If any errors or inconsistencies are found during the reasoning process, the Assistant reflects and corrects them logically. The reasoning process and answer are enclosed within <think> </think> and <answer> </answer> tags, respectively, i.e., <think> reasoning process here, with potential reflections and corrections </think><answer> final answer here, with the key result enclosed in \textbackslash boxed\{\} </answer>.

\end{tcolorbox}
\end{table}

\textbf{Evaluation Prompt.}  
To ensure fairness and generalizability, we use a simplified prompt for evaluation instead of the training prompt.  

\begin{table}[ht]
\centering
\caption{System prompt used for evaluation.}
\label{tab:eval_prompt}
\begin{tcolorbox}[colframe=gray!60, colback=gray!10, fonttitle=\bfseries, coltitle=black, boxrule=0.6mm, arc=2mm, width=\textwidth, toptitle=3pt, bottomtitle=3pt]
\footnotesize
Please solve the problem step by step and put your answer in one \textbackslash boxed\{\}.  
If it is a multiple-choice question, only one letter should appear inside the \textbackslash boxed\{\}.
\end{tcolorbox}
\end{table}

\end{document}